\documentclass{article} 
\usepackage{iclr2026_conference, times}

\UseRawInputEncoding

\usepackage{amsmath,amsfonts,bm}

\def\eqref#1{equation~\ref{#1}}

\def\1{\bm{1}}

\DeclareMathAlphabet{\mathsfit}{\encodingdefault}{\sfdefault}{m}{sl}
\SetMathAlphabet{\mathsfit}{bold}{\encodingdefault}{\sfdefault}{bx}{n}

\DeclareMathOperator*{\argmax}{arg\,max}

\usepackage{hyperref}
\usepackage{url}
\usepackage{algpseudocode}
\usepackage{algorithm}
\usepackage{amsthm}
\usepackage{setspace}    
\usepackage{multirow}
\usepackage{makecell} 
\usepackage{graphicx}
\usepackage{booktabs} 
\usepackage{algorithmicx}
\usepackage{caption}
\usepackage{subcaption}
\usepackage{tabularx}
\usepackage{bbm}
\usepackage{mathtools}
\usepackage{wrapfig}
\usepackage{makecell}
\usepackage{tikz}
\usepackage{amssymb}
\usepackage{pifont}

\usepackage{fvextra} 

\usepackage[most]{tcolorbox}
\tcbuselibrary{breakable}
\usepackage{lipsum}

\newtheorem{thm}{Theorem}[section]
\newtheorem*{thm*}{Theorem}

\definecolor{ForestGreen}{RGB}{34,139,34}
\definecolor{Red}{RGB}{139,34,34}
\newcommand{\pos}[1]{\textcolor{ForestGreen}{\tiny\,(#1)}}
\newcommand{\negative}[1]{\textcolor{Red}{\tiny\,(#1)}}

\newcommand{\ours}{Gumbel Distillation}

\usepackage{xspace}
\newcommand{\oursnew}{Gumbel Distillation\xspace}

\title{Gumbel Distillation for Parallel Text Generation}

\author{
\hspace{-3pt}Chi Zhang\thanks{Equal contribution.}\quad
Xixi Hu\footnotemark[1]\quad
Bo Liu\quad
Qiang Liu \\
Department of Computer Science \\
The University of Texas at Austin \\
\texttt{\{chizhang, hxixi, bliu, lqiang\}@cs.utexas.edu}
}

\newcommand{\mask}{\texttt{[MASK]}}

\iclrfinalcopy 

\begin{document}

\maketitle

\vspace{-15pt}
\begin{abstract}
The slow, sequential nature of autoregressive (AR) language models has driven the adoption of parallel decoding methods. However, these non-AR models often sacrifice generation quality as they struggle to model the complex joint distribution of token sequences. 
To narrow this performance gap, we introduce \ours{}, a novel distillation technique that enables parallel decoders to learn this distribution effectively. 
Our method leverages the Gumbel-Max trick to create a deterministic mapping from a latent Gumbel noise space to the output tokens of a high-performing AR teacher. As a model-agnostic technique, \ours{} seamlessly integrates with diverse parallel decoding architectures, including MDLM and BD3-LM. Experiments on LM1B and OpenWebText show that \ours{} substantially improves the generation quality of parallel language models, achieving a 30.0\% improvement in MAUVE score and 10.5\% in generative perplexity over MDLM trained on OpenWebText dataset. Code available at: \hyperlink{https://github.com/hxixixh/gumbel-distill}{https://github.com/hxixixh/gumbel-distill}.

\end{abstract}

\vspace{-15pt}
\section{Introduction}
\vspace{-5pt}
Autoregressive (AR) language models have set the standard for text generation~\citep{radford2019language, brown2020language, touvron2023llama}, but their sequential, token-by-token inference process introduces significant latency, hindering their use in real-world applications. 
To address this bottleneck, various parallel decoding methods have emerged, including Masked Diffusion Language Models (MDLMs)~\citep{shi2024simplified,sahoo2024simple, arriola2025block, ouyour, nie2025large} and Multi-Token Prediction (MTP)~\citep{cai2024medusa, gloeckle2024mtp, liu2024deepseek}. These non-autoregressive approaches accelerate inference by generating multiple tokens simultaneously. However, this speedup often comes at a cost of degradation in generation quality.

This performance gap originates from a fundamental challenge in non-AR modeling~\citep{gu2017non}: the difficulty of learning the joint probability distribution of an entire sequence. AR models elegantly handle this by factorizing the distribution into a product of conditional probabilities using the chain rule~\citep{vaswani2017attention}, capturing the dependency of each token on its predecessors. In contrast, to enable simultaneous prediction, parallel decoders operate under a naive assumption of conditional independence among the tokens generated in a single step. Previous works~\citep{xiao2022survey, liu2024discrete, xu2024energy, song2025ideas, wu2025fast, kim2025any} have pointed out that the simplification, while necessary for parallelism, disrupts the natural sequential dependencies of language. For example, predicting the phrase ``San Francisco" requires knowing that ``Francisco" is highly dependent on the co-occurrence of ``San" in the same block. When failing to capture these intricate local dependencies, parallel models can produce errors such as token repetition, resulting in text that is less coherent or grammatically flawed.

In this work, we aim to 
narrow this quality gap by fundamentally improving the capacity of parallel decoders without sacrificing the speed of parallel decoding. We argue that the key issue is how to make the learning problem for the non-AR student model easier. Instead of asking the model to learn the complex distribution of language from scratch, could we provide a \textit{``blueprint"} from a powerful AR teacher that hints how a specific output sequence was formed?

To that end, we introduce \emph{\ours{}}, a novel distillation framework that enables a non-AR student to learn the complex token dependencies captured by an AR teacher. The core of our method is the Gumbel-Max trick, which we use to establish a deterministic mapping from a simple Gumbel noise distribution to the target output sequence from the AR teacher. The resulting noise vector serves as the latent ``blueprint" to its paired token sequence, uniquely encoding the sampling decisions from the teacher.
By training the student to reconstruct the text conditioned on the Gumbel noise, we turn the difficult joint-distribution matching problem into a supervised learning problem.

A key advantage of our approach is its simplicity and general applicability. Because the distillation process is framed as adding a conditional input, \ours{} can be seamlessly integrated as a plug-and-play module into a wide variety of existing parallel decoding architectures. It does not require complex changes to the model's structure, making it a flexible tool for enhancing different frameworks. We demonstrate this by applying \ours{} to state-of-the-art architectures, including MDLM~\citep{sahoo2024simple}, BD3-LM~\citep{arriola2025block} and Medusa~\citep{cai2024medusa}, showing it serves as a simple yet powerful complement. Our experiments validate that \oursnew significantly improves generation quality, enabling high-quality multi-token decoding. 

Our main contributions are as follows:
\vspace{-5pt}
\begin{itemize}
    \item We propose \ours{}, a novel framework for distilling knowledge from an AR teacher into a parallel student. The method enables the student to model the distribution more accurately by learning a direct mapping from Gumbel noise to the teacher's output and therefore better approximating the joint token dependencies.
    \item To demonstrate the simplicity and effectiveness of our approach, we integrate it with state-of-the-art parallel decoding frameworks, including MDLM, BD3-LM and Medusa, improving their performance while making minimal architectural changes.
    \item Our extensive experiments validate that our method successfully alleviates the gap between autoregressive quality and parallel decoding efficiency, achieving state-of-the-art results in high-quality, accelerated text generation with parallel decoding.
\end{itemize}

\vspace{-5pt}
\section{Related Work}
\vspace{-5pt}

\paragraph{Diffusion Language Models}
Inspired by the success of diffusion models in visual domains~\citep{ho2020denoising, songdenoising, liu2022flow, lipman2022flow}, a number of works have explored diffusion language models (DLMs) by extending the technique to text domains. An intuitive approach is to model text in a continuous space and apply diffusion directly~\citep{li2022diffusion, gongdiffuseq, han2022ssd, dieleman2022continuous}. To fit the discrete nature of text, another approach focuses on using discrete processes featuring forward and reverse dynamics, with the foundational work laid out by D3PM~\citep{austin2021structured}. Numerous variants follow~\citep{hoogeboom2021argmax, he2022diffusionbert, campbell2022continuous, sun2022score, chen2022analog, wu2023ar,zheng2023reparameterized, gat2024discrete}, with masked diffusion language models being a specific focus for recent research~\citep{chen2024diffusion,shi2024simplified, nie2024scaling, zhengmasked,lou2024discrete, ouyour, vongeneralized, kim2025any} due to their strong performance. Notably, MDLM~\citep{sahoo2024simple} derived a simplified objective for masked diffusion that connects the operation of unmasking tokens to a true generative process. BD3-LM~\citep{arriola2025block} builds upon MDLM to introduce a hybrid approach that models sequences in a block-wise manner while applying diffusion within each block, enabling flexible length generation and improved inference efficiency through kv-caching. In our paper, we apply \oursnew to these two architectures as examples to show the effectiveness and flexibility of our method. 

Encouraged by the above works, recent efforts have scaled
diffusion language models to billions of parameters~\citep{nie2025large, gongscaling, ye2025dream}. Furthermore, commercial success like Seed Diffusion~\citep{song2025seed}, Mercury Coder~\citep{labs2025mercury} and Gemini Diffusion~\citep{deepmindgeminidiffusion} have also demonstrated their practical viability, achieving performance comparable to state-of-the-art AR models with significantly faster speed of inference.

\vspace{-5pt}
\paragraph{Multi-Token-Prediction}
Multi-token-prediction (MTP) strategies~\citep{cai2024medusa, gloeckle2024mtp, liu2024deepseek, samragh2025your} enable faster, flexible sampling of AR language models. However, current MTP models often predict the probability of each token in a block independently, which is an issue also relevant to diffusion language models. \cite{song2025ideas,zhu2025di} argues that this naive conditional independence assumption greatly limits the capacity of the model distribution. Although the issue can be alleviated via corrections to the inference procedure~\citep{gu2017non, leviathan2023fast, santilli2023accelerating, shi2024simplified} such as speculative decoding, and most notably recent works like adaptive parallel decoding~\citep{israel2025accelerating}, our \oursnew method points to a fundamental solution that in principle allows the model to train and infer from the true joint distribution. 

\vspace{-5pt}
\paragraph{Knowledge Distillation for Parallel Decoding} A key application of knowledge distillation is to accelerate inference by training a parallel student model to mimic a powerful AR teacher. This line of research was pioneered by~\cite{gu2017non}, which introduced sequence-level knowledge distillation to train a parallel student on the outputs of an AR teacher. This technique spurred the development of parallel models for machine translation~\citep{ghazvininejad2019mask, gu2019levenshtein}. More recently, some work~\citep{gu2023minillm,kou2024cllms} have adapted the idea for general language models. Different from these works, our proposed \oursnew distills from the AR teacher's internal sampling process, which allows the parallel student to learn the full probability landscape.

\vspace{-5pt}
\section{Background}
\vspace{-5pt}
\paragraph{The Challenge of Parallel Decoding}
An AR language model, denoted as $p^\ast$, generates a sequence of tokens $\boldsymbol{x}^{1:n} = (x^1, \cdots, x^n)$ one at a time.
\footnote{Throughout the paper, we use bold symbols for vectors.} 
The probability of the sequence is factorized as $p^\ast(\boldsymbol{x}^{1:n}) = \prod_{i=1}^n p^\ast(x^i | \boldsymbol{x}^{<i})$, where $\boldsymbol{x}^{<i}$ represents the preceding tokens. This sequential factorization effectively captures the dependencies between tokens, leading to high-quality output, but at the cost of slow, iterative inference.

In contrast, \emph{parallel decoding} methods accelerate this process by generating multiple tokens at once. Instead of a single token $x^i$, these models predict a \textit{subset} of all tokens, denoted by $\boldsymbol{x}^\mathcal{I}$, where $\mathcal{I}$ is a set of token indices. The already generated tokens $\boldsymbol{x}^{\neg \mathcal{I}}$, serve as the context. This paradigm applies to various parallel decoding architectures; for instance, in MTP~\citep{cai2024medusa, gloeckle2024mtp, liu2024deepseek}, $\mathcal{I}$ is the set of indices for a block of future tokens, and in MDLMs~\citep{shi2024simplified,sahoo2024simple, arriola2025block, ouyour, nie2025large}, $\mathcal{I}$ is the set of indices for the tokens to unmask. However, to enable simultaneous prediction for all indices in $\mathcal{I}$, these models are often forced to adopt a conditional independence assumption. They approximate the true, sequentially dependent distribution of the AR model $p^*$ with a block-wise factorized one:
$$
p_\theta(\boldsymbol{x}^\mathcal{I} | \boldsymbol{x}^{\neg \mathcal{I}}) = \prod_{i \in \mathcal{I}} p_\theta(x^i | \boldsymbol{x}^{\neg \mathcal{I}}). 
$$
This assumption ignores the dependencies within the target set $\boldsymbol{x}^\mathcal{I}$, and as a result, these architectures fail to capture the local structure essential for natural language. 
We aim to mitigate the quality degradation caused by this factorization.

\vspace{-5pt}
\paragraph{Gumbel-Max Trick} 
The Gumbel-Max trick \citep{gumbel1954statistical} is a reparameterization technique for sampling from a categorical distribution. 
To sample from a category $X \in \{1, \dots, V\}$ defined by the Boltzmann distribution with a vector of logits $\boldsymbol{l} = (l_1, \cdots, l_V) \in \mathbb{R}^V$:
$$
P(X = k) = \exp(l_k) ~/~ \sum_{j=1}^V \exp(l_j),
$$
we can first draw i.i.d. standard Gumbel noise $\{\xi_k \sim \mathcal{G}(0, 1)\}_{k=1}^V$, and then compute:
$$
Y = \argmax_k \left( l_k + \xi_k \right).
$$
The resulting sample $Y$ has the exact \emph{same} distribution as $X$. In practice, a common method to sample $\xi \sim \mathcal{G}(0, 1)$ is via inverse transform sampling, where we first sample a Uniform variable $u \sim \mathcal{U}[0, 1]$, and then calculate the Gumbel noise as $
\xi = -\log \left( - \log u \right)$.

One crucial insight here is that we can reframe the sampling process: the randomness is externalized into the Gumbel noise vector $\boldsymbol{\xi} = (\xi_1, \dots, \xi_V) \in R^V$. While categorical sampling is stochastic, the argmax function in Gumbel-Max is fully \emph{deterministic} once the logits and the noise are known.

\begin{figure}[t!]
    \centering
    \vspace{-10pt}
    \includegraphics[width=\columnwidth]{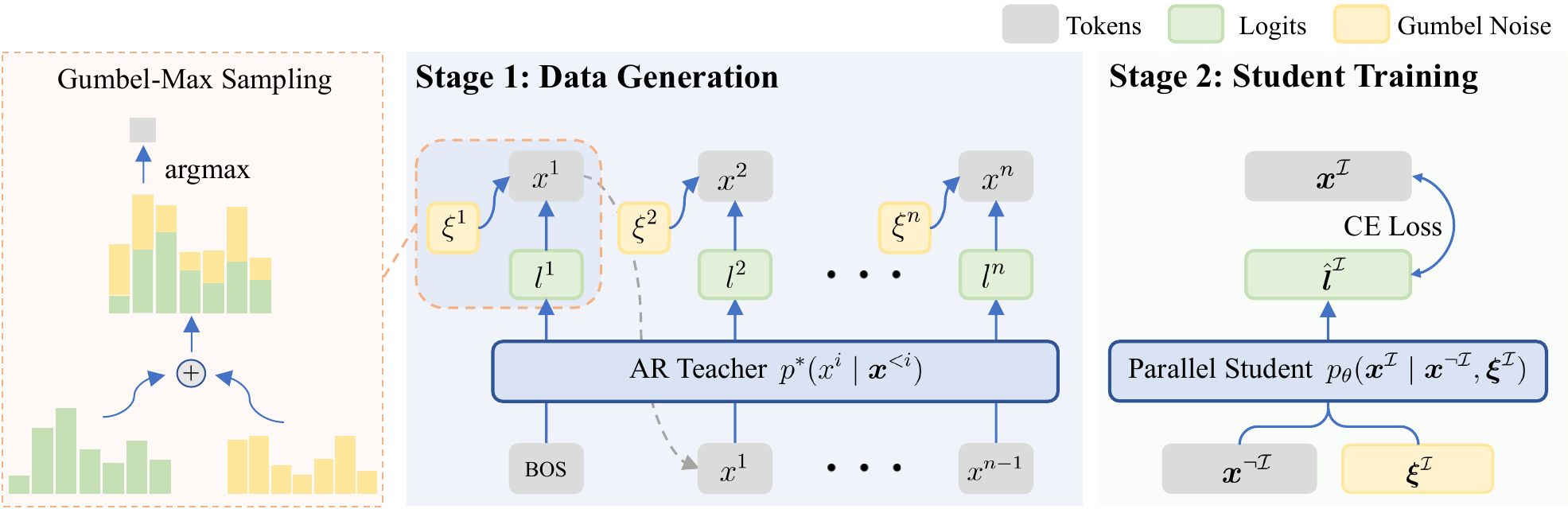}
    \caption{
    \textbf{A conceptual overview of \oursnew.} The distillation process consists of two steps: (1) \textbf{Data Generation:} An autoregressive teacher model's sampling process is combined with Gumbel noise to deterministically generate pairs of token sequences and their corresponding Gumbel noise. In practice, one could alternatively extract $\{\xi\}$ in parallel based on an offline training corpus (Section~\ref{sec:gumbel_method}); (2) \textbf{Student Training:} A parallel student model is trained to predict a target subset of tokens, conditioned on both the context and the Gumbel noise from the teacher.
    } 
    \label{fig:teaser}
    \vspace{-10pt}
\end{figure}

\vspace{-10pt}
\section{Method}
\vspace{-5pt}
Our primary objective is to design a parallel decoder, $p_\theta$, such that its output can match the high-quality output of a powerful AR teacher, $p^\ast$. Formally, given an AR teacher model: 
$$
p^*(\boldsymbol{x}^{1:n}) = \prod_{i=1}^n p^*(x^i | \boldsymbol{x}^{<i}),\quad \text{where}~\forall i, ~p^*(x^i = k | \boldsymbol{x}^{<i}) = \frac{\exp\left( f^\ast(\boldsymbol{x}^{<i})_k \right)}{\sum_{j=1}^V \exp\left( f^\ast(\boldsymbol{x}^{<i})_j \right)},
$$
where $f^\ast(\cdot)_k$ denotes the $k$-th logit predicted by the teacher, we aim to solve the following problem:
\begin{center}
\emph{How can we train a parallel decoder $p_\theta$ to generate a subset of tokens $\boldsymbol{x}^\mathcal{I}$ simultaneously, while ensuring its output distribution $p_\theta(\boldsymbol{x}^\mathcal{I} | \boldsymbol{x}^{\neg \mathcal{I}})$ matches the teacher's distribution $p^\ast(\boldsymbol{x}^\mathcal{I} | \boldsymbol{x}^{\neg\mathcal{I}})$?}
\end{center}
However, this objective is notoriously difficult. A naive approach of using a neural network to directly model the conditional distribution $p_\theta(\boldsymbol{x}^\mathcal{I} | \boldsymbol{x}^{\neg \mathcal{I}})$ can easily become intractable. This is because the model would need to output a probability distribution over a discrete space of size $V^{|\mathcal{I}|}$, which grows exponentially with the number of tokens $|\mathcal{I}|$ generated in parallel.


To overcome these challenges, our key insight is to reframe the difficult distribution matching problem into a more manageable \textbf{supervised learning problem}. We achieve this by reformulating the teacher's stochastic sampling process as a fixed function of random noise. At each generation step, the AR model samples a token from a categorical distribution. The Gumbel-Max trick provides an equivalent formulation: adding random Gumbel noise to the logits and taking the argmax. This reveals a crucial link: for any sequence $\boldsymbol{x}^{1:n}$ generated by the teacher, there exists a corresponding sequence of Gumbel noise $\boldsymbol{\xi}^{1:n}$ that produces it, thereby creating an explicit mapping from noise to text. Consequently, instead of directly learning the joint distribution over a block of multiple tokens, the student's task is simplified to learning this deterministic function: $p_\theta(\boldsymbol{x}^{\mathcal{I}} | \boldsymbol{x}^{\neg \mathcal{I}}, \boldsymbol{\xi}^{\mathcal{I}})$.

\vspace{-5pt}
\subsection{\ours{}: Framework}\label{sec:gumbel_method}
\vspace{-5pt}
Building on the above intuition, we detail \oursnew as a two-stage framework (Figure~\ref{fig:teaser}). The process begins with data generation to produce training instances from the teacher, followed by student training, where the parallel decoder learns the noise-conditioned mapping.

\vspace{-5pt}
\paragraph{Stage 1: Data Generation}In the first stage, we use the AR teacher $p^\ast$ to generate a complete pair consisting of a full token sequence $\boldsymbol{x}^{1:n}$ and its corresponding Gumbel noise sequence $\boldsymbol{\xi}^{1:n}$. This initial step is inherently sequential. 
From this complete pair, we then construct a training example for our parallel student based on its architecture. By selecting a subset of indices $\mathcal{I} \subseteq \{1, \dots, n\}$ to be the prediction target, we partition the data into  triplets $(\boldsymbol{x}^{\neg \mathcal{I}}, \boldsymbol{\xi}^{\mathcal{I}}, \boldsymbol{x}^{\mathcal{I}})$, where:
\vspace{-5pt}
\begin{itemize}
    \item $\boldsymbol{x}^{\neg \mathcal{I}}$ is the context, consisting of the tokens not in our target set.
    \item $\boldsymbol{\xi}^\mathcal{I}$ is the conditional Gumbel noise for the target positions.
    \item $\boldsymbol{x}^\mathcal{I}$ is the target token subset that the student learns to predict. 
\end{itemize}
\vspace{-5pt}

\vspace{-5pt}
\paragraph{Stage 2: Student Training} Once the training data is prepared, the second stage involves training a parallel student decoder $p_\theta$ (e.g. masked diffusion model or multi-token-prediction model) to predict the target token subset $x^\mathcal{I}$ simultaneously. The student model is conditioned on both the context $x^{\neg \mathcal{I}}$ and the corresponding Gumbel noise for the target positions $\boldsymbol{\xi}^{\mathcal{I}}$, and the training objective is to maximize the conditional log-likelihood:
$$
\mathcal{L} = - \mathbb{E}_{(\boldsymbol{x}^{\neg \mathcal{I}}, \boldsymbol{\xi}^\mathcal{I}, \boldsymbol{x}^\mathcal{I})\sim\text{data}}\left[ \log p_\theta (\boldsymbol{x}^\mathcal{I}|\boldsymbol{x}^{\neg \mathcal{I}}, \boldsymbol{\xi}^{\mathcal{I}})\right].
$$
One crucial implementation detail in this framework is how we obtain the set of distillation targets $(\boldsymbol{\xi}^{1:n}, \boldsymbol{x}^{1:n})$ from the AR teacher. We propose two practical methods for this extraction process. The first arises naturally from the Gumbel-Max procedure, while the second approximately equivalent method offers significant computational advantages by allowing us to parallelize the computation.

\vspace{-5pt}
\paragraph{Sequential Gumbel Extraction} The most straightforward approach is to perform sequential sampling from the teacher model $p^\ast$. For each position $i = 1, \dots, n$, we first draw a Gumbel noise vector $\boldsymbol{\xi}^i \in \mathbb{R}^V$ and then recursively apply the Gumbel-Max trick to generate the next token $x^i$,
$$
x^i = \argmax_k \left( \xi^i_k + f^\ast(\boldsymbol{x}^{<i})_k \right),~~\text{where} ~~\xi^i_k = -\log(-\log (u^i_{k})),~~ u^i_k\sim \text{Uniform}(0,1).
$$
By repeating this autoregressive process for the entire sequence, we can generate pairs of full sequences $(\boldsymbol{\xi}^{1:n}, \boldsymbol{x}^{1:n})$, and the mapping from the noise $\boldsymbol{\xi}^{1:n}$ to the text $\boldsymbol{x}^{1:n}$ is deterministic given that the AR model is fixed and deterministic. While computationally intensive, this approach allows us to generate an unlimited amount of high-fidelity distillation data, which can then be partitioned into training triplets as described above. 

\begin{algorithm}[t]
\caption{Parallel Gumbel Sampling for a Sequence}
\label{alg:parallel_gumbel}
\begin{algorithmic}[1]
\Require Ground-truth token sequence $x^{1:n} = (x^1, \dots, x^n)$
\Require Sequence of logit vectors $\boldsymbol{l}^{1:n} = (\boldsymbol{l}^1, \dots, \boldsymbol{l}^n)$, where each $\boldsymbol{l}^i \in \mathbb{R}^V$
\Ensure Posterior Gumbel sample sequence $\boldsymbol{\xi}^{1:n} = (\boldsymbol{\xi}^1, \dots, \boldsymbol{\xi}^n)$,  where each $\boldsymbol{\xi}^i \in \mathbb{R}^V$

\State \textbf{for all} $i \in \{1, \dots, n\}$ \textbf{in parallel do}
    \State \hspace{\algorithmicindent} $\boldsymbol{p}^i \gets \text{Softmax}(\boldsymbol{l}^i)$
    \State \hspace{\algorithmicindent} Sample auxiliary noises: $\zeta_0 \sim \mathcal{G}(0, 1)$ and $\boldsymbol{\zeta} \in \mathbb{R}^V \sim \mathcal{G}(0, 1)^{V}$
    \State \hspace{\algorithmicindent} $\boldsymbol{\xi}^i \gets -\log\left(\exp(-\boldsymbol{\zeta}) + \boldsymbol{p}^i \exp(-\zeta_0)\right)$ \Comment{Assigns to the full vector $\boldsymbol{\xi}^i$}
    \State \hspace{\algorithmicindent} $\xi^i_{x^i} \gets \zeta_0 - \log p^i_{x^i}$ \Comment{Overwrites a single scalar component at index $x^i$}
\State \textbf{end for}

\State \textbf{return} $\boldsymbol{\xi}^{1:n}$
\end{algorithmic}
\end{algorithm}

\vspace{-5pt}
\paragraph{Parallel Gumbel Extraction }While the sequential strategy is simple and robust, it requires many forward passes through the teacher model to generate data. As a highly efficient alternative, we can leverage an existing text corpus by assuming its data was drawn from the teacher's distribution $p^\ast$. 
Given a token sequence $\boldsymbol{x}^{1:n}$ from the corpus, we first perform a single forward pass with the teacher model to get the corresponding logits $\boldsymbol{l}^{1:n}$ The problem is then reframed as recovering the posterior distribution of the Gumbel noise $P(\boldsymbol{\xi}^{1:n}|\boldsymbol{x}^{1:n}, \boldsymbol{l}^{1:n})$ that would reconstruct the original text.

To this end, the following theorem provides a direct, analytical method for sampling from this posterior, enabling us to extract Gumbel noise for an entire sequence in parallel. Using Algorithm~\ref{alg:parallel_gumbel}, we can sample the Gumbel noise vectors for every token in a sequence via one single forward pass through the teacher model, which significantly accelerates the data generation process. A detailed version of the theorem and its proof is available in Appendix~\ref{sec::proof}

\begin{thm}[Gumbel-Max Posterior Sampling]\label{thm:gumbel_posterior}
Assume token $x$ is drawn from the softmax distribution of a logits vector $\boldsymbol{l} = (l_1, \dots, l_V)$, using the Gumbel-Max trick with a Gumbel noise vector $\boldsymbol{\xi} = (\xi_1, \dots, \xi_V)$, i.e. $x = \argmax_k (l_k + \xi_k)$. A valid sample $\boldsymbol{\xi}$ from the posterior distribution $p(\boldsymbol{\xi}|X = x)$ that satisfies the constraint can be drawn by the procedure which we extend for sequences in Algorithm~\ref{alg:parallel_gumbel}. 
\end{thm}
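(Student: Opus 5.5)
We will prove the statement by computing the joint law of $(\max_k(l_k+\xi_k), \boldsymbol{\xi})$ given the argmax, and then checking that Algorithm~\ref{alg:parallel_gumbel} reproduces this law. Write $Z=\log\sum_j e^{l_j}$ and $p_k=e^{l_k-Z}$. The key classical fact we will use is the \emph{max--argmax independence} of the Gumbel-Max trick. Let $M=\max_k(l_k+\xi_k)$. Then $M\sim\mathcal{G}(Z,1)$, and $M$ is independent of $X=\argmax_k(l_k+\xi_k)$. Moreover, conditionally on $X=x$ and $M=m$, the remaining coordinates $\xi_k$ for $k\neq x$ are independent, and each $l_k+\xi_k$ is a $\mathcal{G}(l_k,1)$ variable truncated to $(-\infty,m)$. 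We will verify this directly from the density. The joint density of $\boldsymbol{\xi}$ is $\prod_k g(\xi_k)$ with $g(t)=e^{-t-e^{-t}}$. Change variables to $m=l_x+\xi_x$ and keep the other $\xi_k$. Integrating each $\xi_k$ ($k\ne x$) over $\{l_k+\xi_k<m\}$ gives $\exp(-e^{l_k-m})$. The product of these factors then collapses to $p_x\cdot(\text{Gumbel}(Z)\text{ density at } m)$. This yields the factorization, and with it the conditional truncated-Gumbel structure.

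Next we match this to the algorithm. Set $\zeta_0\sim\mathcal{G}(0,1)$ and define $M=\zeta_0+Z$, which is the correct law of the max and independent of $x$. The algorithm sets $\xi_x=\zeta_0-\log p_x=\zeta_0+Z-l_x$, so $l_x+\xi_x=M$, as required. For $k\neq x$ we must show that $\xi_k=-\log(e^{-\zeta_k}+p_k e^{-\zeta_0})$ has the correct truncated law. Equivalently, we need $l_k+\xi_k=-\log\bigl(e^{-(\zeta_k+l_k)}+e^{-M}\bigr)$. This is the standard truncated-Gumbel sampler: if $G\sim\mathcal{G}(\mu,1)$, then $-\log(e^{-G}+e^{-m})$ is distributed as $\mathcal{G}(\mu,1)$ conditioned on being $<m$. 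We will check this with a CDF computation. For $t<m$, $P(-\log(e^{-G}+e^{-m})\le t)=P(e^{-G}\ge e^{-t}-e^{-m})=1-\exp(-e^{\mu}(e^{-t}-e^{-m}))$, using $P(e^{-G}\ge s)=e^{-e^{\mu}s}$. The target conditional CDF is $\exp(-e^{\mu-t})/\exp(-e^{\mu-m})$ computed from the upper side; more precisely, $P(G\le t\mid G<m)=e^{-e^{\mu-t}}/e^{-e^{\mu-m}}=\exp(-e^{\mu}(e^{-t}-e^{-m}))$. The two expressions are complementary in the correct orientation, since $P(\cdot\le t)$ for $-\log(\cdot)$ corresponds to $P(e^{-G}\ge\cdot)$. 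We will carefully recheck the direction of the inequality here.

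The steps, in order, are: (i) derive the joint factorization and conclude that $p(\boldsymbol{\xi}\mid X=x)$ is the law of $(M,\{\text{truncated Gumbels}\})$ in the coordinates above; (ii) verify the truncated-Gumbel lemma; (iii) observe that the algorithm's output has exactly this structure, with independent $\zeta_0$ and $\zeta_k$, and that all $\xi_k$ with $k\ne x$ are strictly less than $M-l_k$, so that the argmax constraint holds almost surely. The sequence version then follows coordinatewise. Given the teacher's logits, the Gumbel vectors at different positions are independent, and $\boldsymbol{\xi}^i$ affects the sequence only through $x^i$ once $\boldsymbol{x}^{<i}$ is fixed. Hence the posterior factorizes over $i$, which justifies the parallel loop.

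The main obstacle is step (i): the Jacobian and integration bookkeeping that shows the posterior density equals $g_Z(m)\prod_{k\ne x}g(\xi_k)\mathbf{1}[l_k+\xi_k<m]/\exp(-e^{l_k-m})$. From this one must also read off the conditional independence given $M$. The rest consists of elementary CDF checks. Note that the unused coordinate $\zeta_x$ of the auxiliary vector in the algorithm is harmless, because it is overwritten.
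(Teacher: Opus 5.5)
Your argument is correct once you fix one computation (below), and it takes a genuinely different route from the paper's.

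\textbf{What the paper does.} The paper never identifies the law of the constructed vector. It only checks two properties of the output of Algorithm~\ref{alg:parallel_gumbel}:
(1) pointwise, $l_k+\xi'_k<\zeta_0+Z=l_x+\xi'_x$ for every $k\neq x$, because $e^{-\zeta_k}>0$;
(2) each coordinate is marginally $\mathcal{G}(0,1)$ once $x$ is itself drawn from $\mathrm{softmax}(\boldsymbol{l})$. It shows this by passing to $E_k=e^{-\xi'_k}$, computing the survival functions of $p_kU_0$ (case $x=k$) and of $U_k+p_kU_0$ (case $x\neq k$), and mixing them with weights $p_k$ and $1-p_k$ to obtain $e^{-t}$.
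These checks are short, but they are only necessary conditions for exact posterior sampling. They say nothing about the joint dependence of the coordinates given $X=x$. For instance, reusing a single auxiliary $\zeta$ for every $k\neq x$ would pass both checks while producing the wrong joint law. For the full posterior claim, the paper implicitly relies on the A$^*$-sampling results it cites.

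\textbf{What your route buys.} You use the max--argmax factorization ($M\sim\mathcal{G}(Z,1)$ independent of $X$, with conditionally independent truncated Gumbels below $M$) together with the sampler $-\log(e^{-G}+e^{-m})$. This supplies exactly the missing piece: the output has law $p(\boldsymbol{\xi}\mid X=x)$ exactly. That implies both of the paper's properties, and also the stronger fact that mixing over $X$ returns a jointly i.i.d.\ $\mathcal{G}(0,1)$ vector. It also explains where the formula comes from rather than merely verifying it. Your factorization over positions justifies the parallel loop, which the paper asserts without argument. The cost is the change-of-variables bookkeeping in step (i). You have done that correctly, including the posterior density $g_Z(m)\prod_{k\neq x}g(\xi_k)\mathbf{1}[l_k+\xi_k<m]/\exp(-e^{l_k-m})$.

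\textbf{The fix in step (ii).} From $P(e^{-G}\ge s)=e^{-e^{\mu}s}$ you get, for $t<m$, $P\bigl(-\log(e^{-G}+e^{-m})\le t\bigr)=P(e^{-G}\ge e^{-t}-e^{-m})=\exp\bigl(-e^{\mu}(e^{-t}-e^{-m})\bigr)$, with no ``$1-$''. This \emph{equals} $P(G\le t\mid G<m)$; it is not complementary to it. For $t\ge m$ both distribution functions equal $1$. With that line corrected, the lemma and hence the whole proof go through.
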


\begin{figure}[t!]
\centering
\vspace{-10pt}
\includegraphics[width=0.8\linewidth]{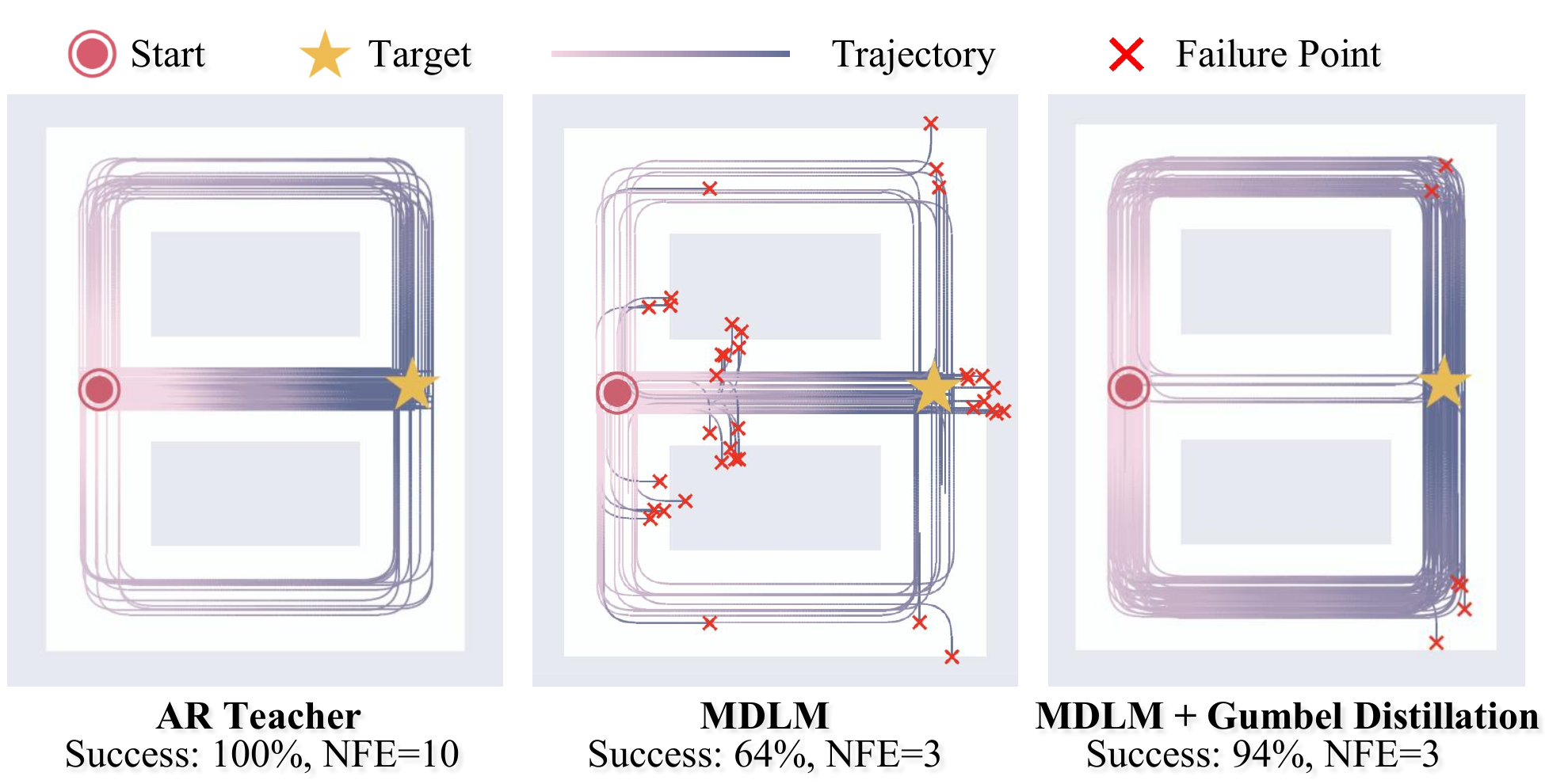}
\caption{\textbf{A toy maze problem illustrating how \oursnew helps a student model learn a structured task. } 
The goal is to generate a valid path from a start (\texttt{<bos>}) to a target (\texttt{<eos>}) using a simple vocabulary (\texttt{up}, \texttt{down}, \texttt{left}, \texttt{right}). The figure visualizes 100 generated paths from each model, slightly jittered for visualization. The baseline MDLM frequently fails, producing incoherent paths, indicating its difficulty in modeling the joint distribution of the sequence. In contrast, our Gumbel-conditioned model successfully finds valid paths, closely matching the performance of the AR teacher but at a fraction of the generation cost (NFE = 3 steps vs. 10 steps).}
\label{fig:maze}
\vspace{-10pt}
\end{figure}

\vspace{-10pt}
\subsection{\oursnew: Application}\label{sec:gumbel_app}
\vspace{-5pt}
Our framework benefits from its simplicity and generality. By reformulating the learning problem as a supervised task conditioned on Gumbel noise, \ours{} can serve as a versatile, plug-and-play enhancement for existing parallel decoders. In this section, we show how \ours{} can be seamlessly integrated into two popular families of such models: Masked Diffusion Language Models and Multi-Token-Prediction Models.
\vspace{-5pt}
\paragraph{Conditioning Masked Diffusion Language Models }Masked Diffusion Language Models (e.g., MDLM \citep{sahoo2024simple}, BD3-LM \citep{arriola2025block}) operate by training a model to recover the original text from a corrupted version, where masking is used for the corruption process. In our setup, given a text sequence where a subset of tokens $\boldsymbol{x}^\mathcal{I}$ has been replaced by \texttt{[MASK]}, the model learns to predict the original tokens $\boldsymbol{x}^\mathcal{I}$ based on the unmasked context $\boldsymbol{x}^{\neg \mathcal{I}}$. To apply \ours{}, we simply augment the model's input. Instead of predicting the masked tokens from the context alone, the student model $p_\theta$ is now also conditioned on the Gumbel noise $\boldsymbol{\xi}^{\mathcal{I}}$ corresponding to the target tokens. The learning objective becomes maximizing $p_\theta(\boldsymbol{x}^\mathcal{I} \mid \boldsymbol{x}^{\neg \mathcal{I}},  \boldsymbol{\xi}^{\mathcal{I}})$. 

A crucial design choice is how to inject the Gumbel signal. We found it most effective to process the Gumbel noise for the masked tokens, $\boldsymbol{\xi}^\mathcal{I}$, into a vector on the model's vocabulary embedding space via a softmax normalization followed by a learned linear projection. Softmax function constrains the Gumbel values to $(0,1)$, controlling for its large tails, while preserving the relative ranks in Gumbel that encode the teacher's sampling choice. These processed Gumbel embeddings then replace the standard \texttt{[MASK]} token embeddings in the input sequence. In this way, the uninformative \texttt{[MASK]} tokens are substituted with a rich ``blueprint" from the teacher, guiding the student for prediction. Full implementation details can be found in Appendix~\ref{sec::mdlm_details}.

The visualization in Figure \ref{fig:maze} provides a compelling toy example of this process, where we apply \oursnew to MDLM on a simple maze navigation task by defining the valid action sequence as the language. The Gumbel-conditioned MDLM improves greatly over the baseline to match the AR teacher's performance with a much lower NFE.

\vspace{-5pt}
\paragraph{Conditioning Multi-Token-Prediction Models}
Multi-Token Prediction (MTP) models, such as Medusa \citep{cai2024medusa}, accelerate decoding by using multiple "heads" to simultaneously generate a block of future tokens. At a given generation step $i$ with $k$ heads, this block corresponds to the target indices $\mathcal{I} = \{i + 1, \dots, i + k\}$. To apply \oursnew, we train these heads to predict the target block $\boldsymbol{x}^{\mathcal{I}}$ conditioned on the preceding context $\boldsymbol{x}^{\neg \mathcal{I}}$ and the corresponding Gumbel noise blueprint $\boldsymbol{\xi}^\mathcal{I}$ from the teacher. 

The Gumbel noise is processed into a conditioning vector and provided to each prediction head. This gives the heads direct guidance on the joint distribution of the token sequence, helping them overcome the standard conditional independence assumption and propose more coherent candidate blocks for verification. Full implementation details can be found in Appendix~\ref{sec::mtp_details}.
\vspace{-10pt}
\section{Experiments}
\vspace{-5pt}

In this section, we empirically validate \ours{}. We demonstrate its effectiveness and generality on Masked Diffusion Models (Section \ref{sec::exp_diffusion}) and Multi-Token Prediction Models (Section \ref{sec::exp_mtp}). We conclude with ablation studies (Section \ref{sec::ablation}) to justify our method's design.
\vspace{-10pt}
\subsection{\ours{} for Masked Diffusion Models }~\label{sec::exp_diffusion}
\vspace{-25pt}

\paragraph{Experimental Setup.}
To evaluate \ours{}, we integrate it into two state-of-the-art masked diffusion language model frameworks: \textbf{MDLM} and \textbf{BD3-LM}.
For all distillation experiments, the autoregressive teacher is \textbf{GPT-2-Large}.
For our main results, we use the efficient \emph{parallel} Gumbel extraction method (Section~\ref{sec:gumbel_method}) to generate distillation pairs online while iterating through the training data.
We compare against two primary baselines: (i) an \textbf{AR model} with the same parameter budget as the student backbones (for a fair quality reference), and (ii) \textbf{students trained from scratch} (MDLM/BD3-LM) on raw text without distillation.
Full implementation details are in Appendix~\ref{sec::mdlm_details}.

\vspace{-5pt}
\paragraph{Results on Unconditional Text Generation} 
We evaluate performance using two key metrics. \textbf{Generative Perplexity} (Gen. PPL, lower is better) measures token-level fluency. For a more holistic assessment of quality and diversity, we use the \textbf{MAUVE score \citep{liu-etal:mauve-theory:neurips2021}} (higher is better), which compares the generated text distribution to a human reference. 
As shown in Table \ref{tab:main-results}, our method, \ours{}, consistently and significantly improves the performance of the baseline models in unconditional text generation. For instance, when applied to MDLM on OpenWebText, \ours{} reduces generative perplexity by 10.5\% and boosts the MAUVE score by 30.0\%. These substantial gains are consistent across other results on BD3-LM, demonstrating our approach's general effectiveness.



\begin{table*}[t]
\vspace{-4pt}
\centering
\small
\setlength{\tabcolsep}{6pt}
\renewcommand{\arraystretch}{1.08}
\caption{\textbf{Main results for unconditional text generation.}
AR denotes an autoregressive model with the \emph{same parameter as the student backbones} (used only as a quality reference);
the distillation \emph{teacher} is GPT-2-Large.
Models are trained for 1M steps on LM1B (length 128) and OpenWebText (length 1024).
Best non-AR scores are in bold.}
\label{tab:main-results}
\vspace{-6pt}
\resizebox{0.9\textwidth}{!}{
\begin{tabular}{lcccc}
\toprule
& \multicolumn{2}{c}{\textbf{LM1B}} & \multicolumn{2}{c}{\textbf{OpenWebText}} \\
\cmidrule(lr){2-3} \cmidrule(lr){4-5}
& \textbf{MAUVE} $\uparrow$ & \textbf{GenPPL} $\downarrow$
& \textbf{MAUVE} $\uparrow$ & \textbf{GenPPL} $\downarrow$ \\
\midrule
AR (student-size) & 0.465 & 36.42 & 0.691 & 14.10 \\
\midrule
MDLM & 0.179 & 78.74 & 0.217 & 38.34 \\
MDLM + \ours{} & \textbf{0.264} & \textbf{67.64} & \textbf{0.282} & \textbf{34.33} \\
\midrule
BD3-LM ($L' = 4$) & 0.193 & 56.98 & 0.251 & 26.40 \\
BD3-LM ($L' = 4$) + \ours{} & \textbf{0.291} & \textbf{46.06} & \textbf{0.304} & \textbf{24.37} \\
\bottomrule
\end{tabular}}

\end{table*}

\begin{table*}[t!]
\centering
\caption{
    \textbf{Qualitative evaluation using Gemini-2.5-pro as an LLM judge on OpenWebText.} Scores are on a 1-10 scale (higher is better). \oursnew shows clear improvements across most dimensions, with the most significant and consistent gains observed in \textbf{Clarity} and \textbf{Factuality}.
}
\vspace{-5pt}
\resizebox{0.9\textwidth}{!}{
    \setlength{\tabcolsep}{0.3cm}
\begin{tabular}{l c@{\,}l c@{\,}l c@{\,}l c@{\,}l c@{\,}l}
\toprule
\textbf{Model} & \multicolumn{2}{l}{\textbf{Clarity}} & \multicolumn{2}{l}{\textbf{Grammaticality}} & \multicolumn{2}{l}{\textbf{Factuality}} & \multicolumn{2}{l}{\textbf{Style}} & \multicolumn{2}{l}{\textbf{Creativity}} \\
\midrule
MDLM & 2.44 & & 2.22 & & 2.70 & & 2.32 & & 2.22 & \\
MDLM + \oursnew & \textbf{2.86} & \pos{+17.2\%} & \textbf{2.57} & \pos{+15.8\%} & \textbf{3.31} & \pos{+22.6\%} & \textbf{2.57} & \pos{+10.8\%} & \textbf{2.36} & \pos{+6.3\%} \\
\midrule
BD3-LM & 2.89 & & 2.95 & & 3.21 & & 3.34 & & \textbf{2.75} & \\
BD3-LM + \oursnew & \textbf{3.41} & \pos{+18.0\%} & \textbf{3.22} & \pos{+9.2\%} & \textbf{3.78} & \pos{+17.7\%} & \textbf{3.35} & \pos{+0.2\%} & 2.68 & \negative{-2.5\%} \\
\bottomrule
\end{tabular}
}
\vspace{-10pt}
\label{tab:llm_judge}
\end{table*}

\vspace{-5pt}
\paragraph{LLM-based Evaluation} 
To complement our statistical metrics with a qualitative assessment, we used Gemini-2.5-pro~\citep{comanici2025gemini} to score the text samples generated by models trained on OWT. As shown in Table \ref{tab:llm_judge}, we use Gemini-2.5-pro to judge all models on five dimensions: clarity, grammaticality, factuality, style, and creativity. Full details on the evaluation prompt and setup can be found in Appendix \ref{sec:llm}.
The results are consistent with our earlier findings using Gen. PPL and MAUVE score, confirming that \oursnew leads to a clear improvement in perceived generation quality. For both MDLM and BD3-LM, we see consistent gains across all dimensions, with particularly strong improvements in Factuality and Clarity. This suggests that the Gumbel conditioning helps the student model generate more coherent and grounded text.

\begin{figure}[t!]
    \centering
    \includegraphics[width=1.0\columnwidth]{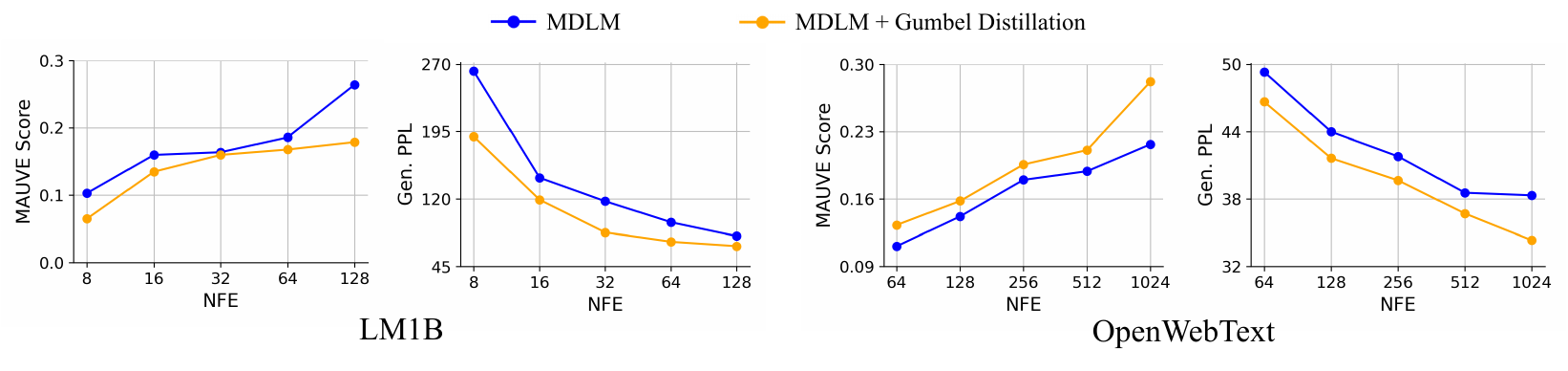}
    \vspace{-15pt}
    \caption{\textbf{Generative Perplexity and MAUVE Score vs. Number of Function Evaluations (NFE) of MDLM on LM1B and OpenWebText}. Our method ($\textrm{\ours{}}$) consistently outperforms the baselines, achieving lower perplexity for a given number of evaluations.}
    \label{fig:ppl-vs-nfe}
    \vspace{-15pt}
\end{figure}

\paragraph{Performance vs. Number of Sampling Steps} In Figure~\ref{fig:ppl-vs-nfe}, we plot MAUVE score and generative perplexity against the number of inference steps (NFE), using the efficient ancestral sampler from \cite{sahoo2024simple} for all models. The results show two key advantages. First, at any given NFE, our models achieve a stronger perplexity than the baselines. Second, our models reach a high level of quality in far fewer steps compared to the baseline models. This efficiency gain suggests that the Gumbel ``blueprint" provides a more direct signal, enabling the student model to converge on a high-quality output during the iterative generation process.

\begin{figure}[t!]
    \centering
    \includegraphics[width=1.0\columnwidth]{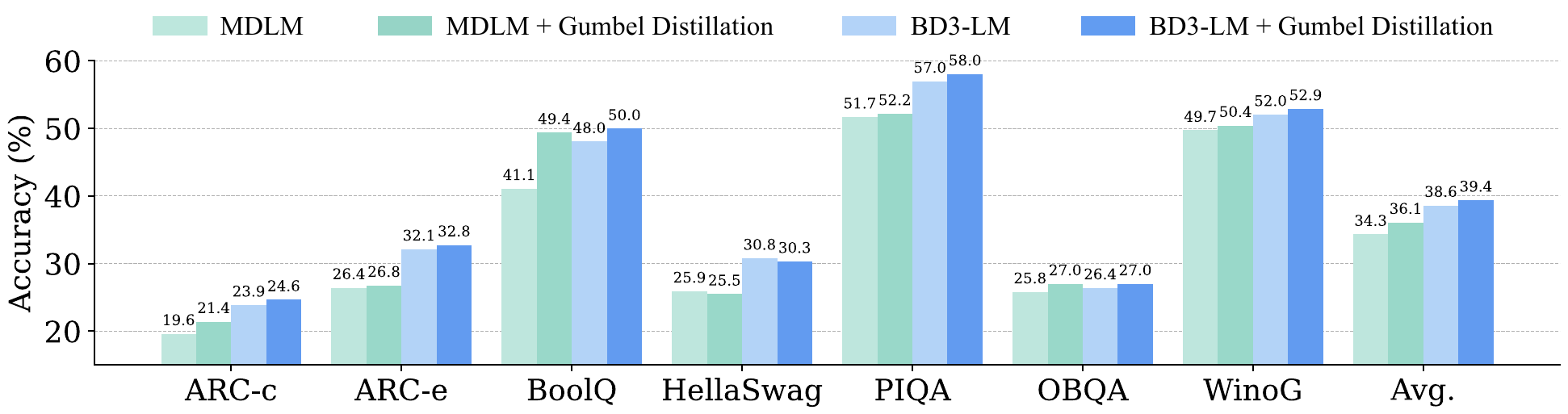}
    \vspace{-15pt}
    \caption{
    \textbf{Zero-shot performance on common-sense reasoning  and question answering benchmarks.} Our \ours{} consistently improves accuracy over the baseline models across a suite of eight tasks. All scores are percentages (\%).
    }
    \label{fig:benchmark_performance}
    \vspace{-15pt}
\end{figure}

\vspace{-5pt}
\paragraph{Zero-shot Benchmark Performance} 
We evaluate the language understanding capabilities of the models trained on OWT on a wide range of question answering tasks. 
Following \citet{vongeneralized}, our selected benchmarks include ARC-e and ARC-c \citep{clark2018think}, BoolQ\citep{clark2019boolq}, HellaSwag \citep{zellers2019hellaswag}, PIQA \citep{bisk2020piqa}, OpenBookQA \citep{mihaylov2018can} and Winogrande \citep{sakaguchi2021winogrande}. The results, presented in Figure~\ref{fig:benchmark_performance}, show that \oursnew effectively transfers the reasoning capabilities of the AR teacher to the parallel student models. For instance, applying our method to MDLM improves its average accuracy across all benchmarks from 34.3\% to 36.1\%. We observe consistent gains across the board, with a particularly notable improvement in BoolQ. Similar improvements are seen when applying our method to BD3-LM (from 38.6\% to 39.4\%), demonstrating the robustness of our approach. This demonstrates that the student not only mimics the teacher's fluency but also inherits its knowledge and common-sense reasoning abilities, substantially improving the performance on these challenging tasks.

\subsection{\ours{} for Multi-Token Prediction }\label{sec::exp_mtp}
\vspace{-5pt}
We apply \ours{} to a MTP framework based on Medusa \citep{cai2024medusa}. This setup provides a direct, single-step test of a model's ability to capture the joint distribution of a sequence. 
\vspace{-5pt}
\paragraph{Experimental Setup.}
We follow the Medusa-1 setup, using a frozen GPT-2-Small backbone and training parallel MTP proposal heads (offsets 1--3, head 0 uses the frozen base LM head) on OpenWebText.
The frozen backbone serves as the verifier, and we apply \ours{} in a self-distillation setup where the backbone also provides the teacher logits.
To test scalability, we additionally apply \ours{} to Medusa heads on a frozen Vicuna-7B backbone.
For all experiments, we use typical acceptance for speculative decoding and report the conditional acceptance rate of each head: a token proposed by a later head is only verified if all preceding tokens in the block have already been accepted.
Full implementation details are in Appendix~\ref{sec::mtp_details}.

\vspace{-5pt}

\begin{table}[t]
\vspace{-6pt}
\centering
\small
\setlength{\tabcolsep}{4pt}
\renewcommand{\arraystretch}{1.08}
\caption{\textbf{Per-position acceptance rates of Medusa MTP heads.}
We report conditional acceptance rates under typical acceptance.
$\Delta_{\mathrm{rel}}\%$ denotes relative improvement over the baseline.
Top: frozen GPT-2-Small trained on OpenWebText (self-distillation).
Bottom: scaling to frozen Vicuna-7B trained on 60k ShareGPT samples (Medusa recipe; Head~0 uses the frozen LM head).}
\label{tab:gumbel-mtp}
\begin{tabular}{lccc}
\toprule
 & Baseline & + \ours{} & $\Delta_{\mathrm{rel}}\%$ \\
\midrule
\multicolumn{4}{c}{\textbf{GPT-2-Small backbone (OpenWebText)}} \\
\midrule
Head 0 (NTP)    & 1.000 & 1.000 & - \\
Head 1          & 0.445 & 0.465 & \pos{+4.5\%} \\
Head 2          & 0.310 & 0.344 & \pos{+11.0\%} \\
Head 3          & 0.264 & 0.322 & \pos{+22.0\%} \\
\midrule
Accepted Length & 1.609 & 1.691 & \pos{+5.1\%} \\
\midrule
\multicolumn{4}{c}{\textbf{Vicuna-7B backbone (ShareGPT)}} \\
\midrule
Head 0 (LM head) & 1.000 & 1.000 & -- \\
Head 1           & 0.545 & 0.593 & \pos{+8.9\%} \\
Head 2           & 0.302 & 0.388 & \pos{+28.1\%} \\
Head 3           & 0.214 & 0.294 & \pos{+37.6\%} \\
\midrule
Accepted Length & 1.745 & 1.891 & \pos{+8.4\%} \\
\bottomrule
\end{tabular}
\vspace{-5pt}
\end{table}

\paragraph{Results and Analysis}

The results in Table~\ref{tab:gumbel-mtp} shows that Gumbel conditioning consistently improves per-head acceptance rates and increases the average number of accepted tokens per decoding step.
On GPT-2-Small, the relative gains grow with head index, from +4.5\% on Head~1 to +22.0\% on Head~3.
Crucially, these gains persist when scaling to Vicuna-7B, from +8.9\% on Head~1 to +37.6\% on Head~3
This trend provides strong evidence that \ours{} helps the MTP heads learn the stronger sequential dependencies required to predict tokens further into the future, directly addressing the core challenge of modeling a joint distribution.

\vspace{-5pt}
\subsection{Ablation Studies}\label{sec::ablation}
\vspace{-5pt}


\begin{table*}[h]
\vspace{-4pt}
\centering
\small
\setlength{\tabcolsep}{10pt}
\renewcommand{\arraystretch}{1.10}
\caption{\textbf{Comparison to outcome-based distillation baselines and APD on MDLM (LM1B).}
Token-level and sequence-level KD are outcome-based AR to NAR distillation baselines; APD is applied only at inference time.
Best MAUVE and best GenPPL are in bold.}
\label{tab:kd-apd}
\vspace{-6pt}
\begin{tabular}{lcc}
\toprule
Method & MAUVE $\uparrow$ & GenPPL $\downarrow$ \\
\midrule
MDLM & 0.179 & 78.74 \\
\quad + Token-level KD & 0.166 & 95.88 \\
\quad + Sequence-level KD & 0.169 & 99.48 \\
\quad + APD & 0.203 & 57.61 \\
\quad + \ours{} & \textbf{0.264} & 67.64 \\
\quad + \ours{} + APD & 0.255 & \textbf{49.28} \\
\bottomrule
\end{tabular}
\vspace{-15pt}
\end{table*}
\paragraph{Additional Comparison to Classical Distillation Baselines and APD.}
We compare \ours{} to \emph{outcome-based} AR to Non-AR distillation baselines as well as an inference-time acceleration method.
Specifically, we consider two forms of Knowledge Distillation (KD): 
(i) token-level KD~\citep{hinton2015distilling}, which matches the teacher's per-position output distribution (e.g., via KL divergence between teacher and student logits), and
(ii) sequence-level KD~\citep{kim2016sequence}, which trains the student on sequences generated by the AR teacher.
We also test Adaptive Parallel Decoding (APD) ~\citep{israel2025accelerating}, an inference-time procedure that uses a lightweight AR verifier to adaptively accept the longest high-quality prefix from diffusion proposals.
Unlike KD and \ours{}, APD \emph{does not modify training}: it is orthogonal and can be applied on top of any trained model, including our Gumbel-distilled models.
All methods are evaluated on the same MDLM backbone and LM1B protocol as above.
Results are shown in Table~\ref{tab:kd-apd}. Token-level KD primarily aligns per-position marginals under the student's conditional-independence assumptions within a parallel block, which does not enforce coherent joint decisions and can even conflict with the original masked-diffusion training objective, leading to worse fluency; 
Sequence-level KD replaces the training distribution with teacher-generated text, which can reduce diversity and encourage mode collapse, yielding worse MAUVE/GenPPL in our setting. In comparison, 
Applying APD on top of a Gumbel-distilled MDLM yields the best GenPPL, with a modest reduction in MAUVE.

\begin{table*}[h]
\centering
\caption{
\textbf{Consolidated ablation study using MDLM on LM1B.} Parallel Gumbel Extraction is superior to the sequential method. The specific Gumbel distribution is critical, as replacing it with Gaussian noise degrades performance, and Uniform noise leads to mode collapse.
}
\resizebox{\columnwidth}{!}{
\begin{tabular}{ccccc|cc}
\toprule
 \makecell{Parallel\\Gumbel Extraction} & \makecell{Sequential\\Gumbel Extraction} & \makecell{Gumbel\\Noise} & \makecell{Gaussian\\Noise} & \makecell{Uniform\\Noise} & MAUVE Score($\uparrow$) & Gen. PPL ($\downarrow$) \\
\midrule
\checkmark &  & \checkmark &  &  & \textbf{0.264} & \textbf{67.64} \\
 & \checkmark & \checkmark &  &  & 0.189 & 86.38\\
 \checkmark &  & & \checkmark &  & 0.242 & 81.43 \\
 \checkmark &  & & & \checkmark & 0.097 & - \\
\bottomrule
\end{tabular}
}
\label{tab:ablation}
\vspace{-10pt}
\end{table*}

\paragraph{Impact of Sequential vs. Parallel Gumbel Extraction} As previously discussed in Section \ref{sec:gumbel_method}, we proposed two methods for generating the (\texttt{noise}, \texttt{text}) pairs for distillation: Sequential Extraction, which generates new text via ancestral sampling from the teacher, and Parallel Extraction, which recovers the posterior Gumbel noise from an existing text corpus. Although we have provided theoretical support for the equivalence of these two methods, it still remains to be validated empirically whether Parallel Extraction works as well as Sequential Extraction in practice. Therefore, we conduct ablation experiments by obtaining a dataset with the same size as LM1B via the sequential method, and then train MDLM with and without \oursnew on this synthetic dataset. 

Table \ref{tab:ablation} shows that Parallel Gumbel Extraction yields better performance than its sequential counterpart, reducing generative perplexity from 86.38 to 67.64. This result may seem counterintuitive, as sequential sampling generates data directly from the teacher's distribution. We suppose that it is susceptible to the teacher model's own errors and biases. If the teacher generates repetitive or low-quality text, the student will be trained to replicate these imperfections. In contrast, Parallel Extraction leverages a high-quality, diverse text corpus as the ground truth for $\boldsymbol{x}^{1:n}$. It then uses the teacher only to infer the posterior Gumbel noise $\boldsymbol{\xi}^{1:n}$ that would have led to the high-quality text.

\vspace{-5pt}
\paragraph{Impact of Gumbel vs. Other Noise Sources} 
Our method's core principle is the deterministic link between Gumbel noise and token probabilities established by the Gumbel-Max trick. To check whether the specific properties of the Gumbel distribution are critical or harmful, we ablate our approach by replacing it with two other noise sources. (1) \textbf{Gaussian Noise:} A standard random signal commonly used in deep learning, which we transform Gumbel into and then feed as input; (2) \textbf{Uniform Noise: }A particularly important baseline, as it is the precursor to Gumbel noise in the inverse transform sampling procedure ($\xi = -\log(-\log(u))$). This directly tests if the specific transformation to the Gumbel distribution is necessary, or if any simple random variable is sufficient.

As shown in Table \ref{tab:ablation}, both alternatives fail to match the performance of Gumbel noise. Using Gaussian noise as the conditional signal significantly degrades performance, resulting in worse Gen. PPL than the naive baseline. Using Uniform noise leads to training instability and mode collapse, where the model generates extremely low-diversity text. The choice of Gumbel noise is fundamental to creating a structured ``blueprint" that the student model can effectively learn from.

\vspace{-5pt}
\section{Conclusion}
\vspace{-5pt}
In this work, we introduced \ours{}, a novel distillation framework designed to help parallel decoders overcome their struggle to model the joint distribution of token sequences. By using the Gumbel-Max trick to create a deterministic ``blueprint" from an autoregressive teacher's sampling process, \ours{} reframes the intractable distribution matching problem into a simple supervised task. Our experiments demonstrate that it is a versatile, plug-and-play module that improves the quality, efficiency, and reasoning capabilities of diverse architectures like Mask Diffusion Models and MTPs. Future work could involve scaling this technique to larger foundation models and exploring the latent Gumbel space as a new avenue for controllable text generation.

\vspace{-10pt}
\paragraph{Limitation.}
One limitation of this work is that as vocabulary size $V$ grows, the dimensionality of the Gumbel noise vector grows proportionally.
In our implementation this adds a single projection of cost $\mathcal{O}(VH)$ (applied once per sequence/block) and does not scale with model depth,
so the \emph{relative} parameter/compute overhead typically decreases as the transformer backbone scales (Appendix~\ref{sec:complexity}). Still, this can lead to higher computational costs and more challenges in high-dimensional spaces.
Future work could further mitigate this cost via structured or low-rank representations of the Gumbel noise. Despite this, our work paves the way for developing highly efficient generative models without compromising on quality, making powerful generative AI more accessible for real-world applications.
\section*{Acknowledgments}
We thank Chaochao Yan for helpful discussions and feedback.
We also thank members of the UT Machine Learning Lab for comments and for computational support. This work was supported in part by the Institute for Foundations of Machine Learning (IFML), the Office of Naval Research (ONR) under Grant No. N00014-25-1-2354 and a grant from Google.


\bibliography{iclr2026_conference}
\bibliographystyle{iclr2026_conference}

\newpage
\appendix
\section{Use of Large Language Models}
\label{supp:llm_usage}
Regarding paper writing, we used LLM only for text polishing and grammar correction during manuscript preparation. No LLMs were involved in the conception or design of the method, experiments, or analysis. All technical content, results, and conclusions have been independently verified and validated by the authors. Apart from this, we also used LLM as an evaluation tool in our experiments to judge the generation quality of text samples.

\section{On the Validity of the Posterior Gumbel Sampling Method}\label{sec::proof}

In this section we give a formal proof of Theorem~\ref{thm:gumbel_posterior}. Similar techniques on the posterior Gumbel distribution have been explored in prior works such as \cite{DBLP:conf/icml/KoolHW19}, and we use some established findings from  \cite{10.5555/2969033.2969171}. Before proceeding with the proof, we first restate the full theorem from the main text.





\begin{thm*}[Posterior of the Gumbel-Max Process]
\label{thm:gumbel_posterior_appendix}
Assume $X \in [V]$ is sampled from a set of logits $\boldsymbol{l} = (l_1, \dots, l_V) \in \mathbb{R}^V$ using the Gumbel-Max trick, such that $X = \argmax_{k \in [V]} (l_k + \xi_k)$, where $\forall k,~\xi_k \sim \mathcal{G}(0, 1)$ are i.i.d. standard Gumbel noises. Let $p_k = \frac{\exp(l_k)}{\sum_j \exp(l_j)}$ be the softmax probability for each category $k \in [V]$ and $\boldsymbol{\xi} = (\xi_1, \dots, \xi_V)$. The posterior probability density function over $\boldsymbol{\xi}$ conditioned on the outcome $X=x$ is given by:
\begin{equation}
    p(\boldsymbol{\xi} | X=x) = 
    \frac{1}{p_x} \cdot
    \mathbf{1}_{\{l_x + \xi_x \ge l_k + \xi_k, \forall k \in [V]\}} \cdot
    \prod_{k=1}^V e^{-\xi_k - e^{-\xi_k}}
\end{equation}
where $\mathbf{1}_{\{\cdot\}}$ is the indicator function.

Furthermore, a sample $\boldsymbol{\xi}$ can be drawn from this posterior by first sampling auxiliary i.i.d. variables $\zeta_0, \dots, \zeta_V \sim \mathcal{G}(0, 1)$ and then applying the construction:
\begin{equation}
\xi'_k =
    \begin{cases}
        \zeta_0 - \log p_x & \text{if } k = x \\
        -\log\left(\frac{\exp(-\zeta_k)}{p_k} + \exp(-\zeta_0)\right) & \text{if } k \neq x
    \end{cases}
\end{equation}
This constructed sample $\boldsymbol{\xi}'$ is guaranteed to have the following two properties:

1. (Condition satisfaction) The argmax of the perturbed logits matches the conditioned outcome.
    \begin{equation*}
        \argmax_{k \in [V]} (l_k + \xi'_k) = x
    \end{equation*}
2. (Marginal Preservation) Each component of the sample is marginally a standard Gumbel.
    \begin{equation*}
        \forall k \in [V], \quad \xi'_k \sim \mathcal{G}(0, 1)
    \end{equation*}
\end{thm*}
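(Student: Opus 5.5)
The plan has three parts. I would prove the posterior density by Bayes' rule, Property 1 by a direct inequality, and Property 2 by identifying the law of each $e^{-\xi'_k}$. Throughout I would normalize the logits to $l_k=\log p_k$. This is harmless because adding a constant $c=\log\sum_j e^{l_j}$ to every logit changes neither the argmax nor the $p_k$.

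\emph{Posterior density.} The prior density of $\boldsymbol{\xi}$ is $\prod_k e^{-\xi_k-e^{-\xi_k}}$. The likelihood $P(X=x\mid\boldsymbol{\xi})$ is the indicator $\mathbf{1}_{\{l_x+\xi_x\ge l_k+\xi_k,\ \forall k\}}$, since the argmax is deterministic. The evidence is $P(X=x)=p_x$, which is exactly the Gumbel-Max trick recalled in the Background section. Bayes' rule then gives the displayed formula. Ties have probability zero, so $\ge$ and $>$ are interchangeable.

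\emph{Condition satisfaction.} With $l_k=\log p_k$ we get $l_x+\xi'_x=\zeta_0$. For $k\neq x$,
\[
l_k+\xi'_k=-\log\!\Big(\frac{e^{-\zeta_k}}{p_k^2}+\frac{e^{-\zeta_0}}{p_k}\Big).
\]
Since $0<p_k\le 1$ and $e^{-\zeta_k}>0$, the argument of the logarithm strictly exceeds $e^{-\zeta_0}$. Hence $l_k+\xi'_k<\zeta_0=l_x+\xi'_x$ for every $k\neq x$, so the argmax is $x$ surely. This step is routine.

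\emph{Marginal preservation.} This is the step I expect to be the main obstacle. The natural tool is the exponential representation: $\xi\sim\mathcal G(0,1)$ if and only if $e^{-\xi}\sim\mathrm{Exp}(1)$.
\begin{itemize}
\item For $k\neq x$, $e^{-\xi'_k}=e^{-\zeta_k}/p_k+e^{-\zeta_0}$. This is a sum of independent $\mathrm{Exp}(\text{rate }p_k)$ and $\mathrm{Exp}(1)$ variables, so I would compute its Laplace transform and compare it with $1/(1+s)$.
\item For $k=x$, $\xi'_x=\zeta_0-\log p_x$ is a Gumbel with location $-\log p_x$.
\end{itemize}

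Carried out for a fixed conditioning value $x$, this check does not return a standard Gumbel. The $k=x$ component is shifted by $-\log p_x\ge 0$. For $k\neq x$, $\mathbb E[e^{-\xi'_k}]=1/p_k+1>1$. So I would not expect Property 2 to go through literally as worded for fixed $x$.

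I would then verify the construction against the posterior density just derived. Under that density, $l_x+\xi_x$ is the max of the perturbed logits, i.e.\ a $\mathcal G(0,1)$ variable $\zeta_0$ under our normalization. For $k\neq x$, $l_k+\xi_k$ is a Gumbel truncated above at $\zeta_0$, and truncation gives
\[
e^{-\xi_k}=p_k\big(e^{-\zeta_0}+E_k/p_k\big)=e^{-\zeta_k}+p_k e^{-\zeta_0},\qquad E_k\sim\mathrm{Exp}(1).
\]
This matches line 4 of Algorithm~\ref{alg:parallel_gumbel}, with the multiplicative $p_k$ placed differently from the displayed construction.

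The statement I expect to be provable is therefore the following. The construction as implemented in the algorithm samples exactly from the posterior; I would confirm this via the density computation, using the independence of $\zeta_0,\dots,\zeta_V$ and the memorylessness of exponentials. Marginal standard-Gumbel behaviour then holds for each $\xi'_k$ once $x$ is drawn from $p$: mixing the exact posterior against $P(X=x)=p_x$ recovers the $\mathcal G(0,1)^{V}$ prior.

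If a referee insists on the per-$x$ reading of Property 2 with the displayed $1/p_k$ formula, the Laplace-transform comparison above is where the argument breaks. I would report that comparison rather than force the claim.
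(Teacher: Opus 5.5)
Your proposal is correct, and your reading of the statement matches what the paper's appendix proof actually proves. That proof never uses the displayed $k\neq x$ formula. Properties~1 and~2 are both proved for the formula of Algorithm~\ref{alg:parallel_gumbel}, $\xi'_k=-\log\bigl(e^{-\zeta_k}+p_k e^{-\zeta_0}\bigr)$. Under your normalization, the displayed expression is exactly $l_k+\xi'_k$ for this $\xi'_k$, so the display confuses the noise with the perturbed logit. Property~2 is also established only after averaging over $x$, by total probability over $\{X=k\}$ and $\{X\neq k\}$. You can sharpen your caveat: the displayed formula fails Property~2 even in that averaged sense. Drawing $x\sim p$ gives $\mathbb{E}[e^{-\xi'_k}]=p_k^2+(1-p_k^2)/p_k=1+(1-p_k)^2(1+p_k)/p_k>1$ whenever $p_k<1$. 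So switching to the algorithm's formula is forced, not a matter of interpretation.

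Where you genuinely differ from the paper is the route to Property~2.

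The paper works one coordinate at a time with $E_k=e^{-\xi'_k}$:
\begin{itemize}
\item on $\{X=k\}$, $E_k=p_kU_0$, with survival function $e^{-t/p_k}$;
\item on $\{X\neq k\}$, $E_k=U_k+p_kU_0$, whose survival function $\bigl(e^{-t}-p_ke^{-t/p_k}\bigr)/(1-p_k)$ comes from an explicit double integral;
\item weighting by $p_k$ and $1-p_k$ returns $e^{-t}$.
\end{itemize}
This is elementary and self-contained, but it gives only one-dimensional marginals. Those marginals, together with Property~1, do not imply that the construction samples the posterior: one can rearrange mass inside $\{\argmax_k(l_k+\xi_k)=x\}$ without changing any coordinate marginal.

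Your route factors the posterior density into two pieces: a $\mathcal G(\log Z,1)$ law for the maximum $l_x+\xi_x$, and independent upper-truncated Gumbels for the other coordinates. Converting the truncation via memorylessness then recovers the algorithm's formula. This proves the headline claim that the construction is an exact posterior sampler. The averaged statement then holds jointly, $\sum_x p_x\,p(\boldsymbol{\xi}\mid X=x)=\prod_k e^{-\xi_k-e^{-\xi_k}}$, not merely coordinatewise.

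The price is the max/argmax factorization, which takes two lines: integrate out $\xi_k$, $k\neq x$, in the posterior density you derived. Property~1 for the algorithm's formula then follows from the support of that density, or from the same one-line inequality you already used.
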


\subsection{Proof of Property 1}
\label{sec:proof_prop1}

\begin{proof}
We show that for any $k \neq x$, the perturbed score of the winning category strictly dominates, i.e., $l_x + \xi'_x > l_k + \xi'_k$. Substituting the construction formula for the winning variable ($k=x$), we have:
\begin{equation*}
    l_x + \xi'_x = l_x + \zeta_0 - \log p_x = \zeta_0 + \log Z
\end{equation*}
where we applied the softmax identity $\log p_x = l_x - \log Z$.

For any losing variable ($k \neq x$), since $\zeta_k \sim \mathcal{G}(0, 1)$, its exponential is strictly positive ($\exp(-\zeta_k) > 0$). This yields the strict inequality:
\begin{equation*}
    \exp(-\zeta_k) + p_k \exp(-\zeta_0) > p_k \exp(-\zeta_0)
\end{equation*}
Because the negative logarithm is a strictly decreasing function, applying $-\log(\cdot)$ to both sides reverses the inequality:
\begin{align*}
    \xi'_k &= -\log\left(\exp(-\zeta_k) + p_k \exp(-\zeta_0)\right) \\
           &< -\log\left(p_k \exp(-\zeta_0)\right) = \zeta_0 - \log p_k
\end{align*}
Adding $l_k$ to both sides and applying $\log p_k = l_k - \log Z$, we obtain the score for class $k$:
\begin{equation*}
    l_k + \xi'_k < l_k + \zeta_0 - \log p_k = \zeta_0 + \log Z
\end{equation*}
Comparing the two scores, $l_x + \xi'_x = \zeta_0 + \log Z > l_k + \xi'_k$. Thus, $\argmax_{k \in [V]} (l_k + \xi'_k) = x$.

\end{proof}
\subsection{Proof of Property 2}
\label{sec:proof_prop2}
\begin{proof}
To rigorously prove Marginal Preservation, we must show that the unconditional distribution of each constructed component $\xi'_k$ is exactly the standard Gumbel distribution $\mathcal{G}(0, 1)$. 

Instead of working directly with the Gumbel PDF, we apply the transformation $E_k = \exp(-\xi'_k)$. Since $\xi \sim \mathcal{G}(0, 1) \iff \exp(-\xi) \sim \text{Exp}(1)$, our objective is equivalent to proving that the unconditional survival function of $E_k$ is $P(E_k > t) = e^{-t}$ for any $t \ge 0$.

Let $U_0 = \exp(-\zeta_0)$ and $U_k = \exp(-\zeta_k)$. Since $\zeta_0, \zeta_k \sim \mathcal{G}(0, 1)$ are independent, $U_0$ and $U_k$ are independent standard exponential variables, $U_0, U_k \sim \text{Exp}(1)$.

By the Law of Total Probability, we condition on whether class $k$ is the winning category ($X=k$, which occurs with probability $p_k$) or a losing category ($X \neq k$, which occurs with probability $1-p_k$):
\begin{equation}
    P(E_k > t) = p_k \cdot P(E_k > t \mid X=k) + (1 - p_k) \cdot P(E_k > t \mid X \neq k)
\end{equation}

\textbf{Case 1: $X=k$ (The Winning Category)}
From the construction formula, $\xi'_k = \zeta_0 - \log p_k$. 
Transforming to the exponential domain:
\begin{equation*}
    E_k = \exp(-(\zeta_0 - \log p_k)) = p_k \exp(-\zeta_0) = p_k U_0
\end{equation*}
The conditional survival probability is:
\begin{equation}
    P(E_k > t \mid X=k) = P(p_k U_0 > t) = P\left(U_0 > \frac{t}{p_k}\right) = \exp\left(-\frac{t}{p_k}\right)
\end{equation}

\textbf{Case 2: $X \neq k$ (The Losing Category)}
From the construction formula, $\xi'_k = -\log(\exp(-\zeta_k) + p_k \exp(-\zeta_0))$.
Transforming to the exponential domain:
\begin{equation*}
    E_k = \exp(-\zeta_k) + p_k \exp(-\zeta_0) = U_k + p_k U_0
\end{equation*}
The conditional survival probability is $P(U_k + p_k U_0 > t)$. Since $U_k, U_0 \sim \text{Exp}(1)$ are independent, their joint PDF is $f(u_k, u_0) = e^{-u_k} e^{-u_0}$. We integrate this over the region $u_k + p_k u_0 > t$:
\begin{align*}
    P(U_k + p_k U_0 > t) &= \int_0^\infty du_0 \, e^{-u_0} \int_{\max(0, t - p_k u_0)}^\infty du_k \, e^{-u_k} \\
    &= \int_0^{t/p_k} e^{-u_0} e^{-(t - p_k u_0)} \, du_0 + \int_{t/p_k}^\infty e^{-u_0} (1) \, du_0 \\
    &= e^{-t} \int_0^{t/p_k} e^{-u_0(1 - p_k)} \, du_0 + \exp\left(-\frac{t}{p_k}\right) \\
    &= e^{-t} \left[ \frac{1}{-(1-p_k)} e^{-u_0(1-p_k)} \right]_0^{t/p_k} + \exp\left(-\frac{t}{p_k}\right) \\
    &= \frac{e^{-t}}{1-p_k} \left( 1 - \exp\left(-\frac{t}{p_k}(1-p_k)\right) \right) + \exp\left(-\frac{t}{p_k}\right) \\
    &= \frac{e^{-t}}{1-p_k} - \frac{\exp(-t/p_k)}{1-p_k} + \exp\left(-\frac{t}{p_k}\right) \\
    &= \frac{e^{-t} - p_k \exp(-t/p_k)}{1-p_k}
\end{align*}

\textbf{Combining the Cases:}
Substitute the results from Case 1 and Case 2 back into the Law of Total Probability:
\begin{align*}
    P(E_k > t) &= p_k \exp\left(-\frac{t}{p_k}\right) + (1 - p_k) \left( \frac{e^{-t} - p_k \exp(-t/p_k)}{1-p_k} \right) \\
    &= p_k \exp\left(-\frac{t}{p_k}\right) + e^{-t} - p_k \exp\left(-\frac{t}{p_k}\right) \\
    &= e^{-t}
\end{align*}

The result $P(E_k > t) = e^{-t}$ is exactly the survival function of the standard Exponential distribution. Therefore, $E_k \sim \text{Exp}(1)$, which implies $\xi'_k = -\log E_k \sim \mathcal{G}(0, 1)$. This strictly completes the proof of marginal preservation.
\end{proof}

\subsection{Complexity of Gumbel Conditioning}\label{sec:complexity}
Gumbel Distillation conditions the student on a Gumbel vector $\xi \in \mathbb{R}^{V}$ via a single projection
$W_{\xi} \in \mathbb{R}^{H \times V}$ into the model hidden dimension $H$.
This introduces $VH$ additional parameters and one extra matrix multiplication per block,
independent of the number of transformer layers.
In contrast, the backbone transformer cost scales roughly with $\mathcal{O}(L H^2)$,
so the relative overhead typically decreases as $L$ and $H$ increase.

\begin{table}[h]
\centering
\caption{\textbf{Parameter overhead of the Gumbel projection matrix} ($W_{\xi}\in\mathbb{R}^{H\times V}$).
Overhead is $VH/\text{TotalParams}$. Large vocabularies increase overhead, while larger backbones reduce it.}
\label{tab:gumbel-overhead}
\vspace{-4pt}
\resizebox{0.92\linewidth}{!}{
\begin{tabular}{l c c c c c}
\toprule
Model scale & Layers $L$ & Hidden $H$ & Vocab $V$ & Added params $VH$ & Overhead \\
\midrule
GPT-2 Small & 12 & 768  & 50,257  & 38.6M  & 33.0\% \\
GPT-2 Large & 36 & 1,280 & 50,257  & 64.3M  & 8.3\% \\
LLaMA-7B    & 32 & 4,096 & 32,000  & 131.1M & 1.9\% \\
LLaMA-13B   & 40 & 5,120 & 32,000  & 163.8M & 1.3\% \\
Qwen-7B & 32 & 4,096 & 152,000 & 622.6M & 8.9\% \\
\bottomrule
\end{tabular}}
\vspace{-8pt}
\end{table}

\section{Experimental Details}

\subsection{Masked Diffusion Language Models}
\label{sec::mdlm_details}
\paragraph{MDLM and BD3-LM} Text diffusion models~\citep{austin2021structured, li2022diffusion} extend diffusion from visual domains to language modeling. The design is to reverse a forward process $q$ that gradually corrupts clean text into noise, which can use continuous or discrete representations. A denoising model $p_\theta$ is trained to recover the original text $x$ from a noisy version $x_t$ where $t \in [0,1]$ is the noise level. The learning objective is typically the minimization of the Negative Evidence Lower Bound (NELBO), which can be expressed in a simplified form as:
\begin{equation}
-\log p_{\theta}(x) \le \mathcal{L}_\text{NELBO}(x;\theta) = \mathbb{E}_{t, x_t \sim q(\cdot \mid x)}\left[ -w(t) \cdot \log p_\theta(x \mid x_t) \right],
\end{equation}

A prominent family of text diffusion models uses masking as the corruption process. Masked Diffusion Language Model (MDLM) \citep{sahoo2024simple} is a simple yet effective architecture that learns to predict original tokens given a partially masked sequence. However, MDLM is designed only for fixed-length generation. To address this, Block Discrete Denoising Diffusion Language Model (BD3-LM) \citep{arriola2025block} extends the idea by interpolating between diffusion and AR models. It generates text autoregressively over blocks of tokens, but within each block, it uses a diffusion process identical to that of MDLM. This hybrid approach enables variable-length generation. Because the core objective within a block is the same for both models, we use the simpler MDLM architecture for illustration in Figure \ref{fig:mdlm_gumbel}, but use the full BD3-LM objective for our derivations below.

The BD3-LM objective factorizes the loss across $B$ blocks as follows:
\begin{equation}
\mathcal{L}_{\text{NELBO}}(x^{1:n}; \theta) = \sum_{b=1}^{B} \mathcal{L}_{\text{NELBO}}(x^{(b-1)c + 1:bc} \mid x^{\leq (b-1)c}; \theta).
\label{eq:bd3_factorization}
\end{equation}
Here, the loss for each block is conditioned on the clean, previously generated blocks.

We adapt this block-wise objective to be conditioned on our Gumbel signal. The overall task is to learn the conditional distribution $p_\theta(x^{1:n}|\boldsymbol{\xi}^{1:n})$, and we achieve this by modifying the BD3-LM objective from Equation \ref{eq:bd3_factorization}, adding the corresponding slice of the Gumbel sequence as a condition for each block. This results in a conditional NELBO:
\begin{equation}
\mathcal{L}_{\text{NELBO}}(x^{1:n} | \boldsymbol{\xi}^{1:n}; \theta) = \sum_{b=1}^{B} \mathcal{L}_{\text{NELBO}}(x^{(b-1)c + 1:bc} \mid x^{\leq (b-1)c}, \boldsymbol{\xi}^{(b-1)c + 1:bc}; \theta).
\label{eq:bd3_gumbel_factorization}
\end{equation}

As mentioned in Section~\ref{sec:gumbel_app}, a crucial implementation detail is how the Gumbel signal $\boldsymbol{\xi}^{1:n}$ is injected into the model. In practice, we process the raw Gumbel noise for each block, $\boldsymbol{\xi}^b$, into a dense vector on the embedding space, $g(\boldsymbol{\xi}^b)$, by first performing a softmax operation for normalization, followed by a learned linear projection. Then we use to replace the embeddings of \mask{} tokens in the noised input. Formally, let $E(x_t^b)$ be the initial token embeddings for a noised block $b$, let $M^b$ be a binary mask where $M_i^b = 1$ if the $i$-th token is \mask{} and 0 otherwise, and let $W_{proj}$ be the weight matrix of the linear layer. Then, the final input to the model's transformer layers, $\tilde{E}(x_t^b)$, is:
\begin{equation}
\tilde{E}^b = (1 - M^b) \odot E(x_t^b) + M^b \odot g(\xi^b) \quad \text{where} \quad g(\xi^b) = \text{softmax}(\xi^b) W_{proj}.
\label{eq:gumbel_injection}
\end{equation}
Here $\odot$ denotes element-wise multiplication. This design substitutes the uninformative \mask{} embeddings with Gumbel signals, which serve as a ``blueprint" for what to predict at those positions. In Figure~\ref{fig:mdlm_gumbel}, we give an illustration of how \oursnew is integrated into MDLM. 

\paragraph{Datasets} Following \citet{sahoo2024simple, arriola2025block}, we conduct our experiments on two datasets, The One Billion Word Dataset; \textbf{LM1B}~\citep{chelbaone} and OpenWebText; \textbf{OWT}~\citep{Gokaslan2019OpenWeb}. We use a context length of 128 for models trained on LM1B and 1024 for models trained on OWT. For both datasets, we use the \verb|GPT-2| tokenizer to keep consistent with our selected teacher model. For LM1B, sequences were concatenated and wrapped to a maximum length of 128. For OWT, documents were concatenated and chunked into sequences of 1,024 tokens, with an [eos] token used as a separator between documents. Since OWT does not have an official validation split, we created one by holding out the last 100,000 documents. 

\begin{figure}[h]
    \centering
    \includegraphics[width=0.9\columnwidth]{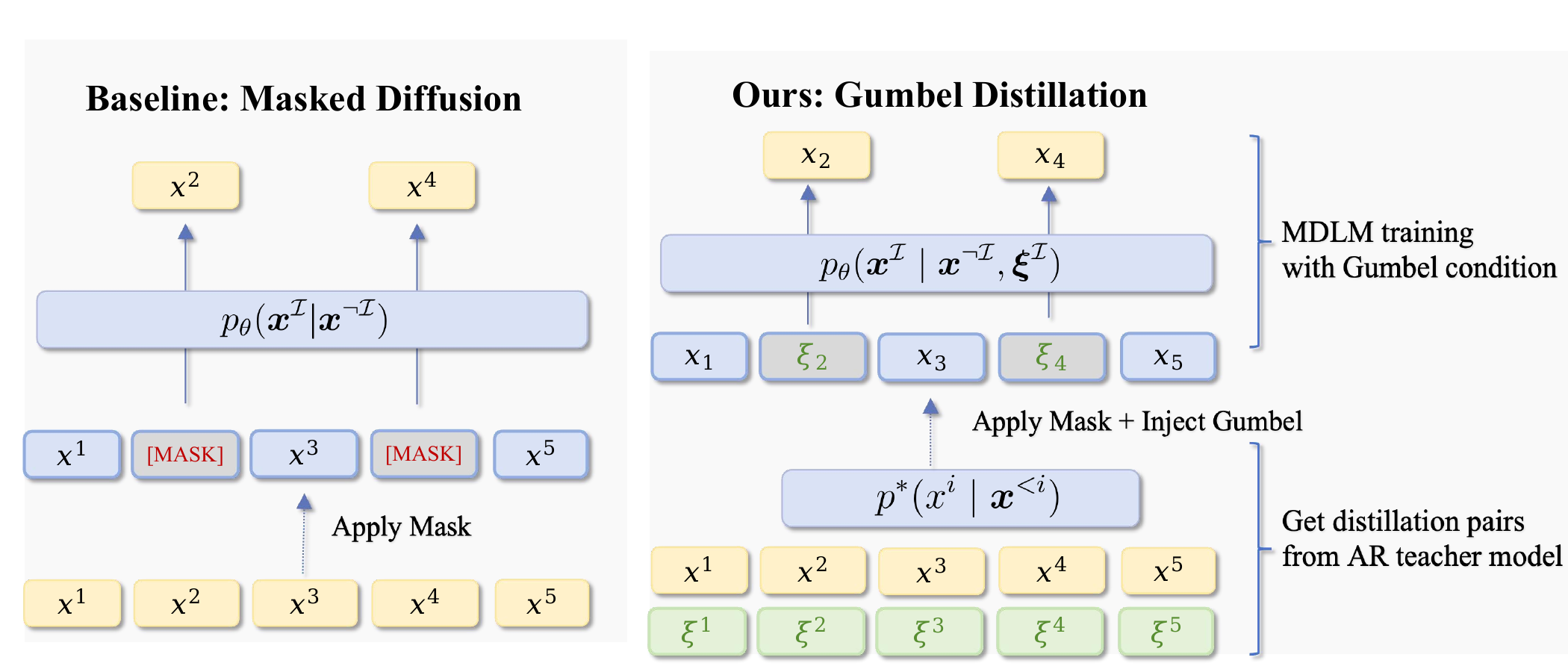}
    \caption{\oursnew Integrated with MDLM. Left: original MDLM architecture, masked positions are predicted based solely on the unmasked context. Right: Gumbel-conditioned architecture: distillation pairs $(\boldsymbol{\xi}, \boldsymbol{x})$ are obtained from an AR teacher; then MDLM is modified to take Gumbel as conditional input, learning to predict masked positions with corresponding Gumbel noise.
    }
    \label{fig:mdlm_gumbel}
\end{figure}

\paragraph{Model Architectures and Baselines}
We use GPT-2-Large as teacher model for MDLM and BD3-LM, both of which augments the diffusion transformer \citep{peebles2023scalable} with rotary positional embeddings \citep{su2024roformer}. All model (including the AR baseline) implementation follows the transformer architecture from \cite{arriola2025block} that uses 12 layers,
a hidden dimension of 768, and 12 attention heads, which corresponds to 110M parameters (excluding embeddings). 
\paragraph{Training}
All models were trained for 1 million steps using the AdamW optimizer with a batch size of 512, and a constant learning rate of $3 \times 10^{-4}$ with a 2,500-step linear warmup.  This translates to 65B tokens and
73 epochs on LM1B, 524B tokens and 60 epochs on OWT. For BD3-LM, we follow the baseline by training with maximum context length first for 850K gradient steps (effectively the same as MDLM) then fine-tune on the target block size for 150K gradient steps. We maintain consistent random seeds to ensure fair comparisons. All experiments are run on H100 GPUs.

\paragraph{Nucleus Sampling} For all baselines, we follow their default setup and employ nucleus sampling \citep{holtzmancurious} with $p = 0.9$. As discussed in many prior works \citep{vongeneralized, wang2025remasking}, one could get suspiciously low generative perplexity values as $p$ becomes very small, at the cost of sacrificing diversity/entropy. We noticed that for models trained with \oursnew, the diversity of the samples is more robust against low $p$ values than the baseline, which could be attributed to the stochasticity brought by the randomly drawn Gumbel as the conditional input. To keep consistent with the baselines, we still use $p=0.9$ for Gumbel-conditioned models.

\paragraph{Gumbel-based Categorical Sampling for Diffusion Models} \citet{zhengmasked} analyzed the numerical issues of Gumbel-based categorical sampling for diffusion models and offered two methods for improvement, including (1) sampling 64-bit Gumbel variables instead of 32-bit to reduce the truncation effect, and (2) using their proposed first-hitting sampler. The BD3-LM baseline adopts both methods, which we choose to follow in our experiments.

However, it is worth noting that (1) \citet{zhengmasked} points out the token-by-token decoding process of masked diffusion models by their first-hitting sampler does not suffer from
notable numerical issues under 32-bit precision, which makes it unnecessary to adopt both methods at the same time, and (2) the first-hitting sampler effectively limits the masked diffusion model to unmask one token at each step, making its inference cost the same as AR models. Therefore, when we obtain samples using fewer NFEs than the generated sequence length, first-hitting is disabled and we fall back to the efficient ancestral sampler from \citet{sahoo2024simple}.

\paragraph{Inference with Calibrated Gumbel Noise}
During training, the Gumbel noise $\xi$ is drawn from $\mathcal{G}(0,1)$ to fully capture the teacher's sampling variance. At inference, however, we observe that the heavy tails of Gumbel can occasionally produce rare, high-magnitude values that can lead to lower-quality samples. To ensure more stable and reliable generation, we can draw from a calibrated Gumbel distribution by introducing a temperature parameter $\tau \in (0,1]$ and scaling the guidance as $\xi \rightarrow \xi * \tau$ from the same Gumbel family but with a reduced variance, effectively controlling the influence of extreme values. Empirically, a slightly reduced temperature is found to bring substantial improvement in generative perplexity with minor decline in entropy, indicating that guidance from a lower-variance distribution is more consistently reliable. In our experiments, we use $\tau=0.85$.

\subsection{Multi-Token-Prediction}\label{sec::mtp_details}
\paragraph{Medusa} Prior to Medusa~\citep{cai2024medusa}, various speculative decoding strategies~\citep{leviathan2023fast, chen2023accelerating} have been proposed to reduce the number of decoding steps for LLMs, where a smaller draft model is used to generate a token sequence, which is then refined by the original larger model for acceptable continuation. Instead, Medusa proposes using multiple decoding heads on top of the backbone model to expedite inference. These are additional decoding heads appended to the last hidden states of the original model. Specifically, given the original model’s last hidden states $h_t$ at position $t$, $K$ decoding heads are added to $h_t$. The $k$-th head is used to predict the token in the
$(t + k + 1)$-th position (the original language model head is used to predict the $(t + 1)$-th position). A single layer of
feed-forward network with a residual connection is used for each
head. Denote the prediction of the k-th head as $p_t^{(k)}$, then the definition of the $k$-th head is:
\begin{equation}
p_{t}^{(k)} = \operatorname{softmax} \left( W_{2}^{(k)} \cdot \left( \operatorname{SiLU}(W_{1}^{(k)} \cdot h_{t}) + h_{t} \right) \right), \text{where } W_{2}^{(k)} \in \mathbb{R}^{d \times V}, W_{1}^{(k)} \in \mathbb{R}^{d \times d}.
\end{equation}
$d$ is the output dimension of the LLM’s last hidden layer
and $V$ is the vocabulary size. $W_{2}^{(k)}$ is initialized identically to the original language model head, and $W_{1}^{(k)}$ to zero, which aligns the initial prediction of the heads with the original model. The heads are trained in conjunction with the original backbone model, which can be frozen (Medusa-1) or trained together (Medusa-2). At inference, a tree-structured attention mechanism is employed to process multiple
candidates from each head concurrently. Empirically, Medusa found that five heads are sufficient at most.

\begin{figure}[h]
    \centering
    \includegraphics[width=0.5\linewidth]{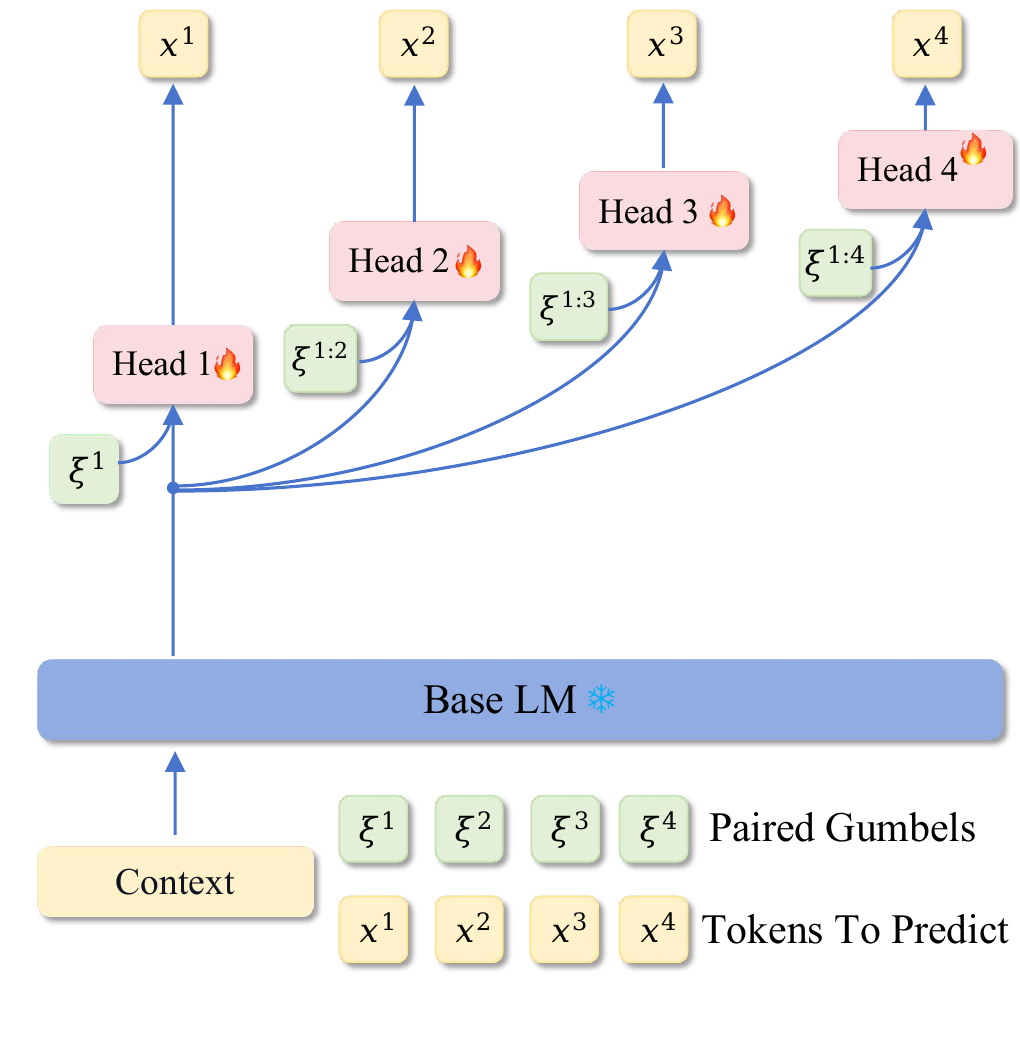}
    \vspace{-20pt}
    \caption{\oursnew integrated with Medusa. The base backbone model is frozen when training the MTP heads. We first obtain distillation pairs $(\boldsymbol{\xi}, \boldsymbol{x})$ from the AR backbone as teacher, then each MTP head is modified to take Gumbel as conditional input.}
    \label{fig:mtp-method}
\end{figure}

\paragraph{Model Architecture} For simplicity and consistency with our other experiments, we choose GPT-2-Small as the backbone and follow Medusa-1's setup to freeze the backbone model during training. Since the open-source code of Medusa mainly supports certain LLMs, we re-implemented the core framework for our controlled-scale experiments on GPT-2-Small. As shown in Figure~\ref{fig:mtp-method}, we select the number of MTP heads as 4. Since each MTP head is designed to be a single linear layer, we first normalize and project Gumbel vectors via the same process as described for MDLMs, then we accumulate the Gumbel sequence via a small causal transformer and feed its output to the corresponding hidden state slice from the backbone as a condition for each head.

\paragraph{Training and Inference} We simulate a self-distillation setup, adopting the assumption that OpenWebText~\citep{Gokaslan2019OpenWeb} matches GPT-2's output distribution. The MTP prediction heads are trained with a batch size of 256 for 20K steps as we observe further training does not improve the performance as much. At inference, we follow Medusa-1's setup and adopt typical acceptance as the strategy for speculative decoding, where the proposal is verified against both a hard probability threshold and a soft entropy-dependent threshold. Specifically, given $x_1, \dots, x_n$ as context, when evaluating the candidate sequence $x_{n+1}, \dots, x_{n+k}$, a candidate is accepted when:
\begin{equation}
    p_{original}(x_{n+k} \mid x_1, x_2, \dots x_{n+k-1}) > \min (\epsilon, \delta \exp(-H(p_{original}(\cdot\mid x_1, x_2, \dots x_{n+k-1})))
\end{equation}
We select $\epsilon=0.1$ and $\delta=1.0$ for our experiments. To evaluate the acceptance rates, we adopt a simple sequential strategy and verify each candidate only when all previous candidates have been accepted. We randomly select beginning slices of samples from the validation set of OWT and let the MTP heads complete after the starting prompt. Reported results are averaged over 500 samples.

\section{Evaluation Details}

\subsection{On Evaluating Generative Quality of Diffusion Language Models}
To quantitatively assess models' performance, we adopt standard language modeling metrics. For unconditional text generation, our primary metric is \textbf{Generative Perplexity (Gen. PPL)}, evaluated using a pre-trained GPT-2-Large model. Since GPT2-Large uses a context size of 1024, we follow our baselines and compute Gen. PPL for samples longer than 1024 tokens using a sliding window with a
stride length of 512. The numbers reported are an average of 300 generated samples.

In addition to Gen. PPL, we also follow recent works~\citep{wang2025remasking, fathi2025unifying, shing2025diffusionblocks} to report the \textbf{MAUVE score}~\citep{liu-etal:mauve-theory:neurips2021} . While Gen PPL is a commonly used metric for measuring fluency, it can be an unreliable indicator of overall quality as it often fails to capture crucial aspects like diversity and creativity. A model can achieve lower perplexity score by generating repetitive or overly generic text that is highly probable under the reference model, yet this does not correlate well with human judgments of quality. In comparison, MAUVE score provides a more comprehensive assessment by comparing the distribution of model-generated text against the distribution of human text, offering a balance between generation quality and diversity that cannot be ``hacked" by tuning sampling hyperparameters. Following \cite{pillutla2021mauve}, we compute MAUVE score with 5000 samples from the validation set of OWT as the reference data.

\subsection{Zero-shot Benchmarks}
For downstream performance evaluation, we use the \verb |lm-eval-harness|~\citep{eval-harness} library which allows custom models to run on benchmarks by providing likelihood estimation and text generation APIs. For likelihood-based multiple-choice tasks, we compute per-token likelihood over both context and completion (excluding padding). We adopt one-step evaluation, as this tests pure generative capacity without partial information, providing deterministic low-variance estimates that give more stable and discriminative results. The models perform one forward pass on the input where only answer completion tokens are masked, and the likelihood for each token is extracted from the model's predictions at the masked positions. For BD3-LM, block boundaries are respected by sequentially masking entire aligned blocks. Sequences exceeding model length are right-truncated to maintain recent context, similar to autoregressive context scrolling.

\vspace{-10pt}
\subsection{LLM-as-Judge}
\label{sec:llm}

\begin{figure}[h]
    \centering
    \includegraphics[width=0.7\columnwidth]{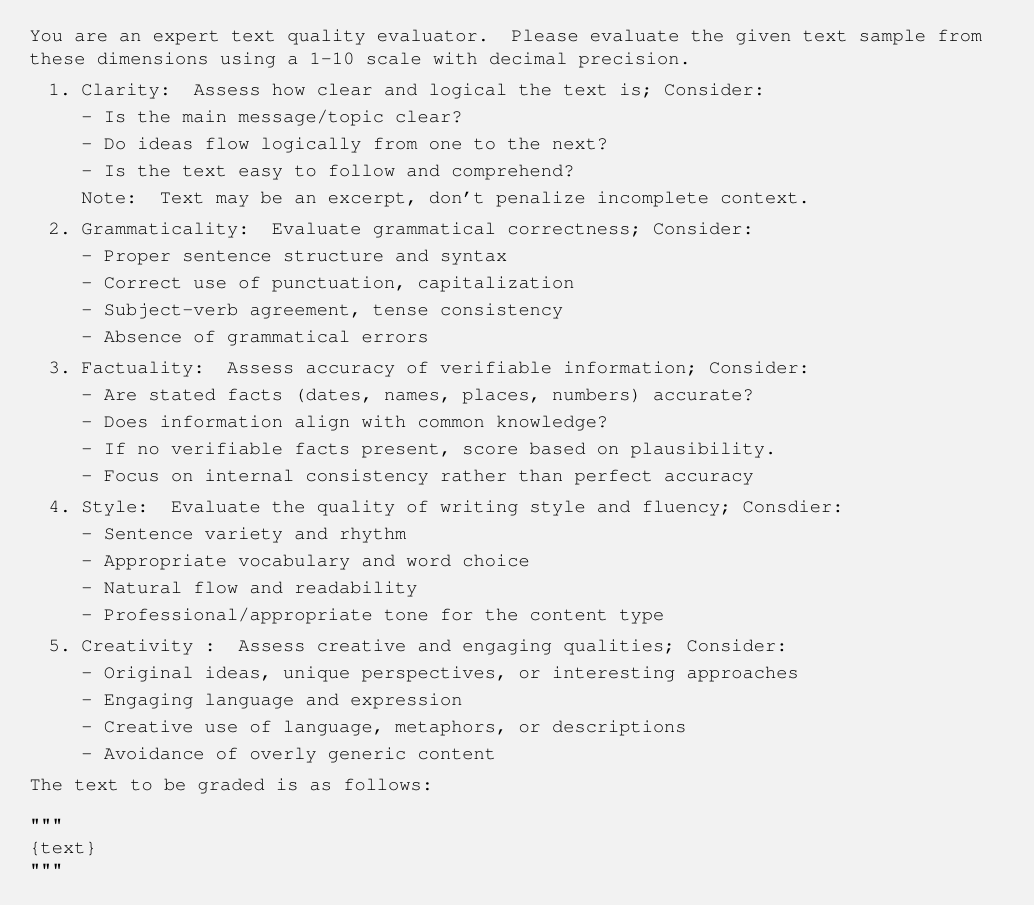}
    \caption{Prompt used for LLM-based evaluation using Gemini-2.5-pro.}
    \label{fig:prompt}
    \vspace{-5pt}
\end{figure}

Following previous work \citep{vongeneralized}, we evaluate the quality of the unconditionally generated text samples by using LLM as a judge. Specifically, we employ the Gemini 2.5 pro API \citep{comanici2025gemini} to rate the samples based on clarity, diversity and fluency on a scale of 1 to 10. We provide the sample text along with instructions for the LLM to first give a justification
and then grade each dimension accordingly. The final output from the LLM is restricted to a json format for parsing. See Figure \ref{fig:prompt} for the prompt template we used to instruct the LLM for the evaluation.

\vspace{-10pt}
\section{Additional Results}
\vspace{-5pt}
\subsection{Maze Navigation Toy}
Here we present more details on the maze navigation toy's setup and results. 

As shown in Figure~\ref{fig:maze}, we used a $5\times4$ map for the maze. Denote the coordinate of the upper left point as $(0, 0)$, our start point is set to $(3,1)$ and target point is set to $(3,4)$. Given a vocabulary of, $\{\texttt{<bos>, <eos>, up,down,left,right}\}$, we define a simple language as a sequence of actions that form a valid path from start to target in the maze, and set the maximum path length is to 10. We used BFS to generate 2000 valid paths as the training dataset. 

For the AR model, its architecture is a transformer with 8 layers, 8 heads and a dimension of 128. We trained it with a batch size of 256 and learning rate of 3e-4 for 200 epochs to obtain an ``ideal" teacher that almost always generates valid paths. Then, we implemented a simplified version of MDLM as student and trained it with/without \oursnew using the AR model as teacher. The student model is a transformer with 4 layers, 4 heads and a dimension of 128, and was also trained with a batch size of 256 and learning rate of 3e-4 for 200 epochs. At inference, we generate 100 paths using different number of steps and count the number of successful generations. For AR, NFE is fixed at 10 whereas we try NFE=1,2,4,8 and 16 for MDLM. The success rates are below:

\begin{table*}[ht!]
\centering
\caption{
    Success rates vs. NFEs for MDLM and MDLM + \oursnew on the maze toy.
}

\resizebox{0.7\textwidth}{!}{
    \setlength{\tabcolsep}{0.3cm}
\begin{tabular}{l c@{\,}l c@{\,}l c@{\,}l c@{\,}l c@{\,}l}
\toprule
NFE & \multicolumn{2}{l}{=1} & \multicolumn{2}{l}{=2} & \multicolumn{2}{l}{=4} & \multicolumn{2}{l}{=8} & \multicolumn{2}{l}{=16} \\
\midrule
MDLM & 53.0 & & 64.0 & & 69.0 & & 81.0 & & 94.0 & \\
MDLM + \oursnew & \textbf{82.0} & & \textbf{86.0} &  & \textbf{97.0} &  & \textbf{99.0} &  & \textbf{98.0} &  \\
\bottomrule
\end{tabular}
}
\vspace{-5pt}
\label{tab:maze_toy_res}
\end{table*}

\vspace{-10pt}
\subsection{Likelihood Estimation}
In Table~\ref{tab:zero-shot-ppl}, we also report the zero-shot validation perplexities. We evaluate the likelihood of models trained with OWT on Penn Tree Bank (PTB; \citep{marcus-etal-1993-building}, WikiText~\citep{merity2016pointer}, LM1B, Lambada~\citep{paperno2016lambada}, AG News~\citep{zhang2015character} and Scientific Papers (Pubmed and Arxiv subsets)~\citep{cohan2018discourse}.  Since the zeroshot datasets have different conventions for sequence segmentation, we wrap sequences to 1024 and do not add eos tokens.

Note that the results here are \textit{heavily} biased towards models trained with \ours{} as they are conditioned on the Gumbel vectors corresponding to the ground truth sequences. Since the model has learned a mapping from Gumbel noises to coherent AR outputs, it can generate good text samples using random Gumbel at inference, but this does not guarantee the prediction of the exact sequence on the validation set. However, we believe these results demonstrate how well the model has learned to estimate the conditional likelihood when the exact corresponding Gumbel is given. 

\begin{table*}[ht!]
\centering
\caption{Validation perplexities ($\downarrow$) of models trained on OWT. }
\resizebox{1.0\textwidth}{!}{
    \setlength{\tabcolsep}{0.3cm}
\begin{tabular}{lccccccc}
\toprule & \textbf{PTB} & \textbf{Wikitext} & \textbf{LM1B} & \textbf{Lambada} & \textbf{AG News} & \textbf{Pubmed} & \textbf{Arxiv} \\
\midrule
MDLM & 92.54 & 34.52 & 66.10 & 48.63 & 65.32 & 43.67 & 37.47 \\
MDLM + \ours{} & 35.12 & 15.57 & 26.01 & 15.56 & 20.03 & 20.81 & 16.85 \\
\midrule
Block Diffusion  & 96.27 & 30.82 & 61.14 & 49.51 & 63.09 & 42.59  & 39.39  \\
Block Diffusion + \ours{}  & 37.54 & 13.86 & 22.69 & 15.78 & 18.19  & 19.78 & 17.02  \\
\bottomrule
\end{tabular}}
\label{tab:zero-shot-ppl}
\end{table*}

\vspace{-10pt}
\subsection{Infilling Evaluation}\label{sec:infilling}
We evaluate whether Gumbel conditioning preserves the versatility of masked diffusion models
beyond unconditional generation.
Following the setup in our rebuttal, we sample sequences of length 128 from the LM1B validation set,
uniformly mask 32 tokens, and use the same MDLM sampler as in the main experiments
to infill the masked positions.
We report GenPPL computed \emph{only on the masked tokens} using the same AR evaluator as in Table~\ref{tab:main-results},
averaged over 300 samples.
\begin{table}[h]
    \centering
    \caption{\textbf{Infilling on LM1B.}
    Gumbel conditioning improves GenPPL on a non-AR infilling task, suggesting it does not overfit to AR-style sampling.}
    \label{tab:infilling}
    \vspace{-4pt}
    \begin{tabular}{l c}
        \toprule
        Model & GenPPL$\downarrow$ \\
        \midrule
        MDLM & 55.1 \\
        MDLM + \ours{} & \textbf{47.8} \\
        \bottomrule
    \end{tabular}
    \vspace{-6pt}
\end{table}

\vspace{-5pt}
\subsection{Model-Size Scaling}\label{sec:model_scaling}
We also study how \ours{} behaves when scaling MDLM. We compare a small model (12 layers, $H{=}768$) and a medium model (24 layers, $H{=}1024$), where the medium models are only partially trained (40\% of tokens seen compared to small models), and the trend persists. 
\begin{table}[h]
\centering
\caption{\ours{} improves both MAUVE and GenPPL for different model scales.}
\label{tab:model_scaling}
\vspace{-4pt}
\begin{tabular}{l c c}
\toprule
Method & MAUVE$\uparrow$ & GenPPL$\downarrow$ \\
\midrule
MDLM (Small) & 0.179 & 78.74 \\
MDLM (Small) + \ours{} & \textbf{0.264} & \textbf{67.64} \\
MDLM (Medium, partial) & 0.186 & 74.22 \\
MDLM (Medium, partial) + \ours{} & \textbf{0.253} & \textbf{61.34} \\
\bottomrule
\end{tabular}
\vspace{-8pt}
\end{table}


\section{Generation Samples}
\begin{figure}[h]
  \centering
  \begin{tcolorbox}[
    width=\textwidth,
    colframe=black,
    colback=white,
    boxrule=0.5pt,
    arc=0mm,
    outer arc=0mm,
  left=5pt,right=5pt,   
  top=2pt,bottom=2pt,   
  boxsep=0pt 
  ]
  {\ttfamily\tiny\raggedright\sloppy\noindent
  \detokenize{
'The two Houses have adopted a bipartisan approach, inviting new rules and licensing releases for federal agencies. Currently, they require the agencies to have in-depth internal review of the rules. They agreed on rules and didn't stop until Banking and Budget Chairmen shared the Senate Finance Act through Senate Finance Bill 2012, though there is no immediate support for any reform. Democrats are not united in wanting a near-complete return to the rules, which would require months of political pressure to rein in regulators.While some Republicans now have pre-existing differences, Majority Chairman Jim Cantor (R-West Virginia) has softened his stance. He said the House will continue support committee work on a bill to help break the rules but also create a complex overhaul.But Mr. Cantor said that, “I sort of implicitly trust that we will pass the bills in a bipartisan way with unified Republican leadership.” Still, Democrats continue to be cautious about what it would cost for the 2012 financial bill to pass. Mr. Cantor’s measure is to be the final one that cleared the Senate next week. The House will return it on the floor for consideration if it passes rules, Mr. Dehner promised.Senators and Republicans say they never expressed support for much more oversight of federal regulators because of the controversial Dodd-Frank overhaul in late 2013. Democrats had resisted plans for several other Dodd-Frank initiatives to go beyond the beltway and create a line for federal oversight. They feared that financial regulators would set up a revolving door and endless financial-related regulations could slow data collections and satisfy people.But Senator Daniel D. Coats (R-Ohio) offered flat tax increases to create the height-line approach. “Now, the idea is are regulatory overhauls,” Financial Services Chairman John Thune said Additional reporting by Benjamin Reeves<|endoftext|>By Edward Snowden: Susan Onghi Kara CORRECTION: John Toner says ""Uhoh pills"" on record. Forty to 2, Edward Snowden. SiriusXM Broadcast A few days ago, Frank Geithner spoke on The Friends Hour of 11 News’ Bloomberg about defending the “ground line” in the fight against “Pragmatism” and saying that “the only alternative in the matter requires a reading of the Elements Policy of Medics.”“Neither, though, should draw a line on important and essential issues.”He said that an alternative is necessary—after an important decision stage for Russia. Vitaliy Goldifkin earlier had outlined a set of decisions with major implications.” Russia’s actions will risk a layer of sanctions and the ire of many Republicans in Congress.He warned that “regional actors are in conflict on a wide level”, saying that “the West perceives that power and influence grandly come to dictate strategic posture, economic, military, and political determination.” He passed on saying “except tranquility” with regards to the nuclear weapons regime, there could be a result of “direct, direct disruption of the Russian economy.” But after just enough debate and more explanation, it had emerged: “The treaty must renew and be upheld. Ultimately, the United States must rely upon our allies to sift through the facts.” Russia made the nuclear commitment publicly, on the possibility of a possible joint military strike. “Nevertheless, any type of action, through the physical and through the enforcement of the treaty, is a priority of the United States,” Rouhani said. The European Union have already indicated they would punish Russia with a retaliatory measure, Russia would impose other sanctions, and far China China look to develop an immediate utter surprise of retaliation, considering China’s economic policy, which applies if Putin seeks it. Putin talked like way about a potential reshuffle of Saudi Arabia and Iran. The Saudis themselves could just show their belligerence toward Russia or encourage the United States to use force to quickly squeeze Russia. But Obama has given the Saudis an excuse to berate them diplomatically for slippage, while relish to suggest that the United States is consummate engaged in sanctions against Russia. That situation continues with the Russian response. The report that Russia is about to install a nuclear missile shield on U.S. One waves the Twitter prompting of unnamed Russian statesman Andrei Lavrov. Others chimed ramefully by sounding the following bell. “It’s not gained from you.” Author Vashemi Kopacak stepped in for Putin’s explanation. “That disregards what you sent over cross borders with your hands, what you chose to eat and listen to strangers while people at or having intel leak out to your enemies. Why they can’t you put your weapons to the enemy and calculate it.’No'
}} \end{tcolorbox}
    \label{fig:mdlm_sample}
    \caption{Sample from MDLM of length 1024 and T = 1024 steps. The generative perplexity of the sample under GPT-2-Large is 62.36 and its entropy is 5.56.}
\end{figure}

\begin{figure}[h]
  \centering
  \begin{tcolorbox}[
    width=\textwidth,
    colframe=black,
    colback=white,
    boxrule=0.5pt,
    arc=0mm,
    outer arc=0mm,
  left=5pt,right=5pt,   
  top=2pt,bottom=2pt,   
  boxsep=0pt 
  ]
  {\ttfamily\tiny\raggedright\sloppy\noindent
  \detokenize{
'The Japan Galactic Institute has announced that it will become the world's ever-growing space lab. The company will be developing a commercial system and used to move Japan's space ship into the next frontier. The market is booming and the central Japanese market forecasts revenues of $2.6 billion to $3.8 billion in 2013.""Japan Galactic is a big step forward in getting used to in Japan, which has the ability to be the biggest of the world,"" said Jean Phillips, chief executive of European Science Systems (ASES), an international research company. ""As for the Japanese government's success in foreign policy, we're proud that the CNAS is a partner. We look like the Japan Galactic Institute is a huge step to something of a world-type science laboratory.""Kashisei and its partners have been developing Japan as a global hub for industries such as solar power and tourism. Starting in the early 1990s, Japan saw building towns, cities, villages, towns, rural areas and villages a quarter to a quarter and a third of the world's GDP. In 2010, the Nordic country recorded an annual GDP of $1.11 billion and grew by $14.1 billion faster than originally projected. Japan's global GDP grew to $14.8 billion, $181 million in 2013, and $130 million, with Asian nations slowly adopting new varieties of seed for crops and indigenous seeds for their own growing economies and economies.That's because Japan has managed to use the technology to get a website for online courses and a new marketing technology, too. As is being developed at Japan Galactic, the CNAS will be able to use a software platform, directly from sensors, chips and computer equipment, to connect the lab to the internet, so anyone can have to connect the technology to a network in order to prove themselves.The work behind the science study will have to be a bit harder. Once Origin Incorporated, the in vitro technology, the cornerstone of Japanese research, will be patented.Until this point in time, the company will have been seeking to fly to and around the European Union. Now it is soon moving a small European research firm into the institute's systems and hardware to make it the largest possible market in the next few years.""We want to aim to take the lead in a scientific lab that extends its capabilities all over the world. We want to see competitive research in pursuit of our own lives. And if this happens, it's a huge breakthrough in the human race and it's a very exciting opportunity for innovation,"" said James Smith, the head of the European Union's main scientific research organization, in his remarks last Wednesday. First, the World Congress will launch a similar technology in July and SpaceShipOne's first public flight in June. But it's the first major launch ever in the US. One first year to take US experiments together was announced by a consortium that had made headlines. Although the European market is already the first to use the technology, not all companies are doing the same.SARC — a medical group whose works show the potential to provide the treatment for cancer — is moving to use the European Union, government, agencies and government scientists to set up its own research laboratory on CNAS. This is a trip up to space, and both experiments will carry a drug and brain tested.The first stage is that it lets the US take a part in its orbit. Unlike the first spaceflight process, it must be transited and overseen by a plane. The technology also lets the US to interact with the environment, including in places like the International Space Station. That probably leads to its own bigger push into the robotic body. But now a second stage is where the FDA is looking to find ways to make comparisons and communicate to the public about how to deliver drugs. It's a way that the effects on the human body have been far less measured.CNAS still needs approval of its patent. Using the technology that could produce therapeutic efficacy in those with psychiatric disorders — a broad vein of neuroscience, medicine and data — would violate several of the agency's guidelines.""We seem to find that a few years progress on the case would be slow and hard to justify because it contradicts these government and the need for higher quality,"" Smith wrote in his remarks.The U.S. Senate in July has suggested it will not approve any proposals that boost sales.A similar measure in May led Europeans to vote on whether or not drugs should face peer-to-peer trading.The news dark off signs in 2015 that the European Parliament passed the European Commission's next financial year, inhibiting the FDA's scientific review of its potential effect on certain drugs.A pilot for a similar study is expected to cost about $2,000 each and is expected to launch in the UK, the end of 2016.<|endoftext|>The Bank of England announced the interest rates on December 7, recommending a discount rate of from 28
}} \end{tcolorbox}
    \label{fig:mdlm_gumbel_sample}
    \caption{Sample from MDLM + \oursnew of length 1024 and T = 1024 steps. The generative perplexity of the sample under GPT-2-Large is 35.19 and its entropy is 5.32.}
\end{figure}

\begin{figure}[h]
  \centering
  \begin{tcolorbox}[
    width=\textwidth,
    colframe=black,
    colback=white,
    boxrule=0.5pt,
    arc=0mm,
    outer arc=0mm,
  left=5pt,right=5pt,   
  top=2pt,bottom=2pt,   
  boxsep=0pt 
  ]
  {\ttfamily\tiny\raggedright\sloppy\noindent
  \detokenize{
'Two years then signed him, and he told his friends. He was one of the best soccer players ever. helped take out the England national team, which is vacant at this stage. It's been a long journey, said Rodriguez, who's now gone beyond the borders of the world.""I'm going to say I love England,"" but to be said England, and to have fun.""Obviously he's very close. I have no doubt he is a highly respected male partner of my country,"" said Rodriguez, who signed as ambassador. ""But not wanting that much to beat the player, because, like a passion. He's going to be here for a long time just to go there."" Rodriguez, like the England Americans and others, has been a huge player to get a top-flight team in the top six would be tough.""He's been on the right side,"" Rodriguez said. He's England sooner than I was. The last time I was with the European team. He's been really tough, whether we get to the fourth round. We have been competitive, the capital of the Southwest region.""Demophia Deillen, who was a first-round pick, one of the best players in the world, who has played heavily in hopes of winning the national championship in the past two years, even as part of a loan by the United States to Colorado part.""But he's not an idol. We don't have that many other guys in the soccer world with the highest-level play -- Chicago but he knows his side,"" Deillen said.She said in a statement after England's World Cup title: ""He has helped a team that signed a numbers-like season in the National Soccer League. That's exciting football.""""""I care"" women's soccer players more at Manchester City than anywhere in the European League of ""everyone is in the United States,"" International Director Brian K. Menjelstra told The JTA. ""I am happy to win World Cup and to have a person able to represent the women's league in England and with women's Soccer at the top of my heart and inspire me to defend my country.""The United Soccer Association currently selected women's first team competition winners, and wishes competition organizers to prepare the best young players beyond the age of 30.Meanwhile, the academy is in Australia and begins in New Zealand. It's now the biggest and biggest tournament the women team in America has played.""It's a challenge for the fans see women's Soccer playing in the World Cup finals,"" Deillen said.That the England Americans had the players they signed. A lot of them have seemed to say that they can't stand these clubs, and this time they are players themselves.""That would be a concern. I think you look at some teams, one year and-16, some of them hoping to get them in four years,"" she said. So far, the England players appear to have no nationalistic mind.In the preliminary stage, there are no signs that the two national players left the team. They have no say on who is now, and the team is still with the England players. I know there's yet another team to play for me and that's really my confidence.""Nevada spokeswoman Laura White later issued a statement to the federation declaring this ""a new one"" between the two sides, which will play in London, as for the kind of spot where the top two are in contention by the two federations.White spoke with the federation at 9:40 p.m., and they said the players didn't be ready to take part before, a statement that indicates the way the women's federation intends to be used.White said the morning before the tournament said he could not be with that team, and that he was prepared to make himself unavailable. He has been contacted by the United States Embassy for comment on the price he had paid.Shake said she is ""where the players are for a while,"" but added that she will continue to be connected to the team.<|endoftext|>Mike Carrington (SMS chief) made his case to Clarke Hall (SSP), who heads Royal North-C. Wales for the west west Wales University Hospital, saying the two are ""very close associates on the NHS"" as a result of a phone call.""My first reaction is I'm not happy at the moment,"" said Royal's deputy chief, a nurse who was traveling last week in his clinic to meet Hall and hospital.On Tuesday morning, a spokesman for the chief ""called into similar situation with our partners, who were working with the NHS, where they approached our officers at the same conversation with the spokesman,"" he said.North-west Wales chief director of the hospital posted official website, saying: ""If there are no good answered on the NHS's mobile phone then
    }}
    \end{tcolorbox}
    \caption{Sample from BD3-LM of length 1024 and T = 1024 steps. The generative perplexity of the sample under GPT-2-Large is 37.7 and its entropy is 5.21.}
\end{figure}

\begin{figure}[h]
  \centering
  \begin{tcolorbox}[
    width=\textwidth,
    colframe=black,
    colback=white,
    boxrule=0.5pt,
    arc=0mm,
    outer arc=0mm,
  left=5pt,right=5pt,   
  top=2pt,bottom=2pt,   
  boxsep=0pt 
  ]
  {\ttfamily\tiny\raggedright\sloppy\noindent
  \detokenize{
    'How many times in my life have I been a woman? I've always felt like a woman in everything I've done to be proud of, just being a great person, so I am reconsidering. My gifter wants to be named the ""Santa"" and we will have to see her on Christmas Day.I have received many gifts over the years, but I am a soppin' and bemo in my body, I feel like I am a woman! All I want to throw away this thing is to do a happy and productive job like that.Something has been happening a lot lately, and it's a very frustrating time today.The gifting is just part of a series of events in the past year. The location is in Orlando, and a relatively long way away from home. The photos are the ones I want to give to the community, not the ones my senders are not getting. But I know the ones I'm sending are donating to or assisting the fans in these gifs.My gifter will be thanking my Santa and giving me hugs! If I are you, you are in my heart. You are here at my funeral. This is what I am all doing. I don't feel like being a hero. I'm literally not giving a damn about the people that I put to them. It's beautiful, but the picture isn't about my heart? If you are me, it would be more than what I do.And it's not something that a lot of people do. If I'm shown in a video, I just see some of the news pieces that have been posted and on. I want for people to show their support for (my gifter) and they will tell me. I'm not doing what I do.I have to give away the gift, I'm not a ""artist"" or an artist at all, and I'm not a ""real creator"" of anything to leave my own name, and I don't want to be a woman to get any support. I want to get a name on the internet for this thing. You aren't getting the name out of people who deserve to be given a name.With that said, at least one of the people in this individual's circle have been kind through me. I have never thought while doing my rounds with the gifter I would give me away. I just laughed. I thought gift exchanges would make me feel better. Or maybe they would have. But I'm not leaving it up to the gifter on it. It's just acceptance. A friend was my mother, but he said that she likes helping people, and she said he wants to get away from things that have been done. However, she didn't make any plan to gift him. He was completely aware of it and just thought and went with it. I want things like this to get away from all of my gifs. I think I have only one thing to do. Both of those things are bad and I cannot gain access to them. These are bad things and when I read about them, I find them justified. I then give the gift to gifters and I take it to the audience to show me that I can do what I could do. I want to be able to get something to give to them, and give it to others to give away. I'm just then going and then tapping into the pressure on me that is stopping them from helping me.This isn't simply a gift. When I've done it, they are usually getting it away anyway from all of them. I don't know how all I did, I wasn't thinking about it, but I went on my way. I put up this opportunity for people that I was going to consider and have a very happy time. It's basically all about me. I was going to give it on to someone who was one of the ones I was after and I didn't even have to give them the support because I was going. It didn't have felt right for me, as I went to what I already did and I just never got the tool that I had. I'm so happy with it. I spent a couple of months getting to get that power back and go back to the world to write about myself. I have not really defined myself as a real artist for being a. So that is a good feeling for all me. The people do not want me to get through the process of loving as an artist, not wanting to be me to be the person I can be. I need to reach out to my fans, to know what the people have against me. But I go through this. I need to show my support for them and gifting. The fact that those are all benefits the people have, of the community itself. That means I have a lot of tiny self-help projects and projects that are awesome.'}}
    \end{tcolorbox}
    \label{fig:bd3lm_sample}
    \caption{Sample from BD3-LM + \oursnew of length 1024 and T = 1024 steps. The generative perplexity of the sample under GPT-2-Large is 27.2 and its entropy is 5.19.}
\end{figure}

\begin{figure}[h]
  \centering
  \begin{tcolorbox}[
    width=\textwidth,
    colframe=black,
    colback=white,
    boxrule=0.5pt,
    arc=0mm,
    outer arc=0mm,
  left=5pt,right=5pt,   
  top=2pt,bottom=2pt,   
  boxsep=0pt 
  ]
  {\ttfamily\tiny\raggedright\sloppy\noindent
  \detokenize{
'The numbers game to me, the most important takeaway is the media wants to sensationalize the story, it won’t matter. The real story here is what I have come to expect of the average reporter. I have come to expect that most things you say on the air will blow up and overshadow the fact that you are a regular citizen and could well have said anything you want to people if they catch on. But this isn’t the case. The average anchor on CNN has changed the story on a regular basis. If you don’t listen to the media, you will lose.In the name of being unbiased, the mainstream media will remain a relatively unknown group. I think they are having an effect on the perception of the public. In my view, everyone on the air is over-represented on network television. If they are not correct, they can win over viewers who may not necessarily be paying attention. A good example of this would be when CNN featured a big story about the North Korea nuclear test and the ratings went down. For some reason, the coverage led to fewer viewers on CNN, and that was definitely what happened.I think the U.S. media is more adept at making propaganda pieces. The result is that when they present a big story, and when they get the impression they are going to win, people might not go on CNN. I expect that my expectation is that the news media will get more viewers on CNN and possibly even win more viewers for cable networks (not to mention newscasts and prime time shows). But if they don’t, I think a new wave of viewers might be swayed to the second tier.\n\nThe outcome for me is unclear. CNN has a big story, and I do expect it to draw in a large percentage of viewers, while the anchors are losing viewers. This could be an entertaining topic that the major networks want to have on air. Maybe they want to win over viewers for those networks? The next step I see is for them to stop showing old news if they don’t think the American public is paying attention.You can follow @Nero on Twitter.Commentscomments<|endoftext|>Disneyland will have you believe the first time you step inside The Collector’s Mini Tower on the grounds of Disneyland’s Hollywood Studios. As the world and park matures, people are returning to visit Disneyland for the first time.Construction on the 10,000 square foot star-studded attraction began in 2001. Disney developed the idea to include the area as a Disneyland Memorial Hall in the park.In August 2013, The Collector’s Mini Tower was revealed to the public during a visit. During the visit, guests saw the incredible view that was included inside the tower and learned about how much it cost to construct this unique attraction.According to Disney, the five-storey building is 42,000 square feet and features more than 1,500 rooms, with windows overlooking the street level of the exterior to the inside.The size of the structure makes it a unique experience for tourists. The attraction itself takes guests to a special foyer lined with board game and ride setups.The Tower was constructed at a cost of \$60 million and includes several areas of state owned amusement park property.<|endoftext|>Both the Canada Goose and the Canadian flag may soon have new colours.The federal government is changing its policy that says any bill to change Canada's flag must be approved by parliament, making the Canada-U.S. trilateral trade agreement less of a reality.\n\nLast month, Finance Minister Bill Morneau issued a memorandum to all government ministers outlining what he said would be changes to the copyright and trademarks laws if the Canada-U.S. trade agreement between the two countries is not approved by Parliament.Under the memorandum, the government is proposing to replace the country's Canadian half with an intergovernmental and unionized version of Canada's flag, or the flag of the Confederation.The new agreement between Canada and the U.S. will add a new label to the Canadian flag called the C Treaty. It marks a change from the government's Conservative government of prime minister Stephen Harper's to a Liberal government of prime minister Justin Trudeau.It is estimated that 12 million Canadians could lose their current U.S.-ca. Canadian half. It is almost impossible to pick a colour without buying one. - Patrick Simon, retired Ottawa journalist.To get your product into the C Treaty, the country must submit its new government agreement, called the C Treaty, to the U.S. Congress and ask the administration to pass it.When the C Treaty is approved, it will be trade liberalization, which will start with a six-month notice period from October 1, 2018, through October 31, 2019. That means the new treaty will be in effect for three years.According to the Finance Minister'
  }}\end{tcolorbox}

    \label{fig:ar_sample}
    \caption{Sample from AR of length 1024 and T = 1024 steps. The generative perplexity of the sample under GPT-2-Large is 15.7 and its entropy is 5.29.}
\end{figure}

\end{document}